\newcommand{\DSS}{Deep Sets for Symmetric elements }
\newcommand{\ignore}[1]{}
\newcommand{\set}[1]{\left\{#1\right\}}
\newcommand{\mset}[1]{\left\{\kern-.5em\left\{ #1 \right\}\kern-.5em\right\}}
\newcommand{\mmset}[1]{\{\kern-.4em\{ #1 \}\kern-.4em\}}
\newcommand{\too}{\rightarrow}
\newcommand{\trace}{\textrm{tr}} 
\newcommand{\ones}{\mathbf{1}}
\newtheorem*{rep@theorem}{\rep@title}
\newcommand{\newreptheorem}[2]{%
\newenvironment{rep#1}[1]{%
 \def\rep@title{#2 \ref{##1}}%
 \begin{rep@theorem}}%
 {\end{rep@theorem}}}
\newtheorem{theorem}{Theorem}
\newtheorem{lemma}{Lemma}
\newcommand{\subalign}[1]{%
  \vcenter{%
    \Let@ \restore@math@cr \default@tag
    \baselineskip\fontdimen10 \scriptfont\tw@
    \advance\baselineskip\fontdimen12 \scriptfont\tw@
    \lineskip\thr@@\fontdimen8 \scriptfont\thr@@
    \lineskiplimit\lineskip
    \ialign{\hfil$\m@th\scriptstyle##$&$\m@th\scriptstyle{}##$\crcr
      #1\crcr
    }%
  }
}
\newcommand{\eg}{{e.g.}}
\newcommand{\ie}{{i.e.}}
\newcommand{\haggai}[1]{{\color{red}{\bf[Haggai:} #1{\bf]}}}
\newcommand{\revision}[1]{{\color{black}{#1}}}
\def\eqref#1{equation~\ref{#1}}
\def\1{\bm{1}}
\DeclareMathAlphabet{\mathsfit}{\encodingdefault}{\sfdefault}{m}{sl}
\SetMathAlphabet{\mathsfit}{bold}{\encodingdefault}{\sfdefault}{bx}{n}
\newcommand{\R}{\mathbb{R}}
\icmltitlerunning{On Learning Sets of Symmetric Elements}
\begin{document}
\twocolumn[
\icmltitle{On Learning Sets of Symmetric Elements}



\icmlsetsymbol{equal}{*}

\begin{icmlauthorlist}
\icmlauthor{Haggai Maron}{nr}
\icmlauthor{Or Litany}{st}
\icmlauthor{Gal Chechik}{ba,nr}
\icmlauthor{Ethan Fetaya}{ba}
\end{icmlauthorlist}

\icmlaffiliation{nr}{NVIDIA Research}
\icmlaffiliation{ba}{Bar Ilan University}
\icmlaffiliation{st}{Stanford University}

\icmlcorrespondingauthor{Haggai Maron}{hmaron@nvidia.com}

\icmlkeywords{Machine Learning, ICML}

\vskip 0.3in
]



\printAffiliationsAndNotice{}  
\begin{abstract}
  Learning from unordered sets is a fundamental learning setup, recently attracting increasing attention. Research in this area has focused on the case where elements of the set are represented by feature vectors, and far less emphasis has been given to the common case where set elements themselves adhere to their own symmetries. That case is relevant to numerous applications, from deblurring image bursts to multi-view 3D shape recognition and reconstruction. 
  In this paper, we present a principled approach to learning sets of general symmetric elements. We first characterize the space of linear layers that are equivariant both to element reordering and to the inherent symmetries of elements, like translation in the case of images. We further show that networks that are composed of these layers, called \textit{\DSS{}} layers (DSS), are universal approximators of both invariant and equivariant functions, \revision{and that these networks are strictly more expressive than Siamese networks}. DSS layers are also straightforward to implement. Finally, we show that they improve over existing set-learning architectures in a series of experiments with images, graphs and point-clouds.
  
 
\end{abstract}


\section{Introduction}\label{s:intro}
Learning with data that consists of unordered sets of elements is an important problem with numerous applications, from classification and segmentation of 3D data ~\cite{zaheer2017deep, qi2017pointnet, su2015multi, kalogerakis20173d} to image deblurring \cite{aittala2018burst}. In this setting, each data point consists of a set of elements, and the task is independent of element order. This independence induces a symmetry structure, which can be used to design deep models with improved efficiency and generalization. Indeed, models that respect set symmetries, e.g. \cite{zaheer2017deep, qi2017pointnet}, have become the leading approach for solving such tasks. However, in many cases, the elements of the set themselves adhere to certain symmetries, as happens when learning with sets of images, sets of point-clouds and sets of graphs. It is still unknown what is the best way to utilize these additional symmetries. 

A common approach to handle per-element symmetries, is based on processing elements individually. First, one  processes each set-element independently into a feature vector using a Siamese architecture \cite{bromley1994signature}, and only then fuses information across all feature vectors. When following this process, the interaction between the elements of the set only occurs after each element has already been processed, possibly omitting low-level details. Indeed, it has been recently shown that for learning sets of images \cite{aittala2018burst, sridhar2019multiview, liu2019permutation}, significant gain can be achieved with intermediate information-sharing layers.

In this paper, we present a principled approach to learning sets of symmetric elements. First, we describe the symmetry group of these sets, and then fully characterize the space of linear layers that are equivariant to this group. Notably, this characterization implies that information between set elements should be shared in all layers. For example, Figure \ref{fig:layer} illustrates a DSS layer for sets of images.  DSS layers provide a unified framework that generalizes several previously-described architectures for a variety of data types. In particular, it directly generalizes DeepSets \cite{zaheer2017deep}. Moreover, other recent works can also be viewed as special cases of our approach \cite{Hartford2018, aittala2018burst, sridhar2019multiview}. 

A potential concern with equivariant architectures is that restricting layers to be equivariant to some group of symmetries may reduce the expressive power of the model \cite{maron2019universality, morris2018weisfeiler, xu2018how}. We eliminate this potential limitation by proving two universal-approximation theorems for invariant and equivariant DSS networks. Simply put, these theorems state that if invariant (equivariant) networks for the elements of interest are universal, then the corresponding invariant (equivariant) DSS networks on sets of such elements are also universal. \revision{One important corollary of these results is that DSS networks are strictly more expressive than Siamese networks}.



To summarize, this paper has three main contributions: (1) We characterize the space of linear equivariant layers for sets of elements with symmetries. (2) We prove two universal approximation theorems for networks that are composed of DSS layers. (3) We demonstrate the empirical benefits of the DSS layers in a series of tasks, from classification through matching to selection, applied to diverse data from images to graphs and 3D point-clouds.  These experiments show consistent improvement over previous approaches. 

\ignore{
}

\begin{figure}[t]
    \centering
    \includegraphics[width=1.05\linewidth]{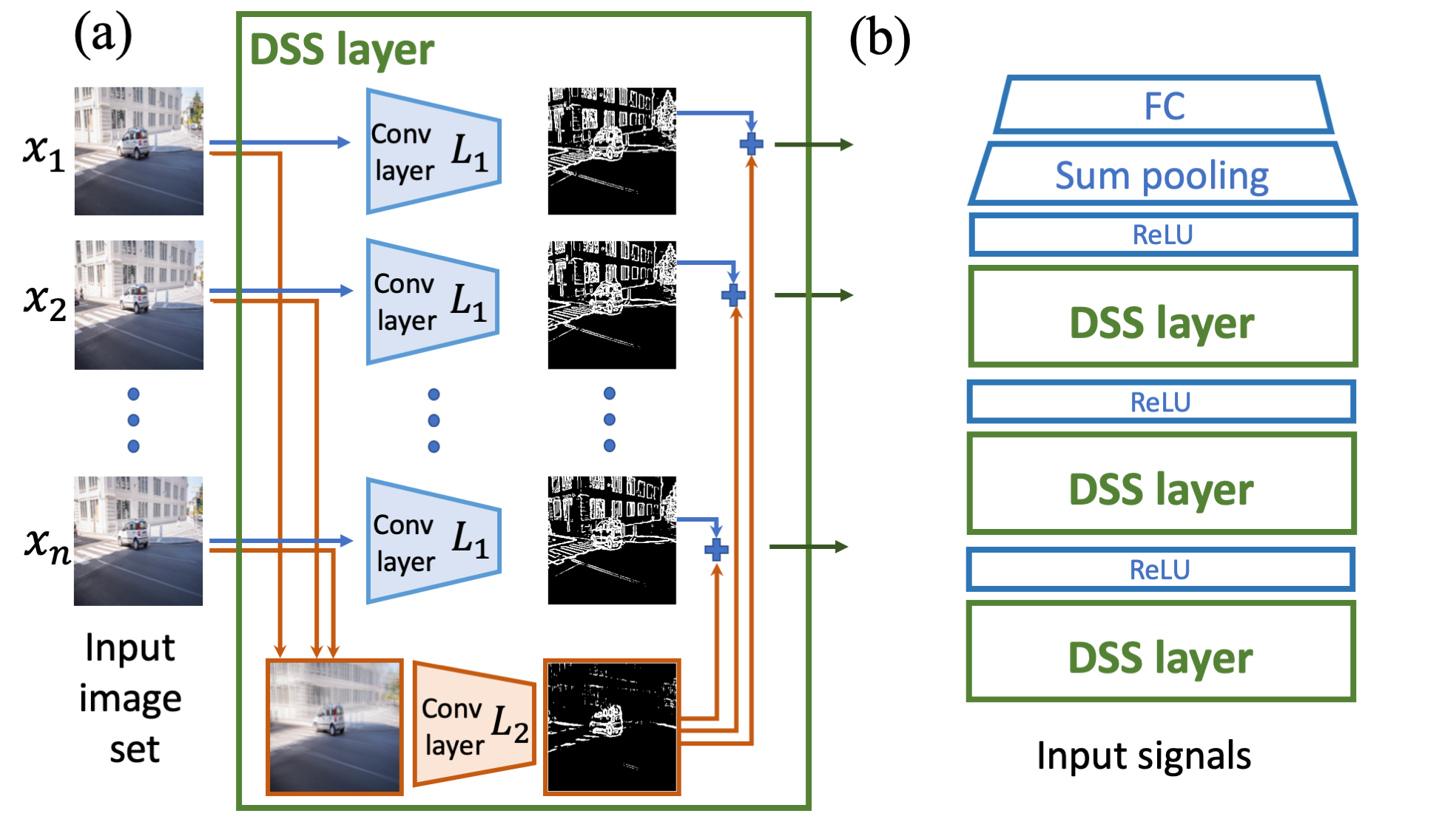}
    \caption{(a) A DSS layer for a set of images is composed of Siamese layer (blue) and an aggregation module (orange). The Siamese part is a convolutional layer ($L_1$) that is applied to each element independently. In the aggregation module, the \emph{sum} of all images is processed by a different convolutional layer ($L_2$) and is added to the output of the Siamese part. (b) An example of a simple DSS-based invariant network.} 
    \label{fig:layer}
\end{figure}

\section{Previous work}\label{s:prev}

\paragraph{Learning with sets.}


Several studies designed network architectures for set-structured input. \citet{vinyals2015order} suggested to extend the sequence-to-sequence framework of \citet{sutskever2014sequence} to handle sets. The prominent works of \citet{ravanbakhsh2016deep, edwards2016towards, zaheer2017deep, qi2017pointnet} proposed to use standard feed-forward neural networks whose layers are constrained to be equivariant to permutations. These models, when combined with a set-pooling layer, were also shown to be universal approximators of continuous permutation-invariant functions.  \citet{wagstaff2019limitations} provided a theoretical study on the limitations of representing functions on sets with such networks. In another related work, \citet{murphy2018janossy} suggested to model permutation-invariant functions as an average of permutation-sensitive functions. 


The specific case of learning sets of images was explored in several studies. \citet{su2015multi, kalogerakis20173d} targeted classification and segmentation of 3D models by processing images rendered from several view points. These methods use a Siamese convolutional neural network to process the images, followed by view-pooling layer. \citet{esteves2019equivariant} recently considered the same setup and suggested to perform convolutions on a subgroup of the rotation group, which enables joint processing of all views. \citet{sridhar2019multiview} tackled 3D shape reconstruction from multiple view points and suggest using several equivariant mean-removal layers in which the mean of all images is subtracted from each image in the set. \citet{aittala2018burst} targeted image burst deblurring and denoising, and suggested to use set-pooling layers after convolutional blocks in which for each pixel, the maximum over all images is concatenated to all images. \citet{liu2019permutation} proposed to use an attention-based information sharing block for face recognition tasks. In \citet{Gordon2020Convolutional} the authors modify neural processes by adding a translation equivariance assumption, treating the inputs as a set of translation equivariant objects.

\paragraph{Equivariance in deep learning.} The prototypical example for equivariance in learning is probably visual object recognition, where the prevailing Convolutional Neural Networks (CNNs) are constructed from convolution layers which are equivariant to image translations. In the past few years, researchers have used invariance and equivariance considerations to devise deep learning architectures for other types of data. In addition to set-structured data discussed above, researchers suggested equivariant models for interaction between sets \cite{Hartford2018}, graphs \cite{Kondor2018, maron2018invariant, maron2019provably, chen2019equivalence, albooyeh2019incidence} and relational databases \cite{graham2019deep}. Another successful line of work took into account other image symmetries such as reflections and rotations \cite{dieleman2016exploiting, cohen2016group, Cohen2016, worrall2017harmonic,cheng2018rotdcf}, spherical symmetries \cite{Welling2018, cohen2019gauge, esteves20173d}, or 3D symmetries \cite{Weiler2018,winkels20183d,worrall2018cubenet,kondor2018n,thomas2018tensor,Weiler2018}. From a theoretical point of view, several papers studied the properties of equivariant layers \cite{Ravanbakhsh2017, Kondor2018a, cohen2019general} and characterized the expressive power of models that use such layers \cite{yarotsky2018universal, maron2019universality, keriven2019universal,maehara2019simple,segol2019universal}.

\section{Preliminaries}\label{s:prem}
\subsection{Notation and basic definitions}
Let $x\in \R^\ell$ represent an input that adheres to a group of symmetries $G \leq S_\ell$, the symmetric group on $\ell$ elements. $G$  captures those transformations that our task-of-interest is invariant (or equivariant) to. The action of $G$ on $\R^\ell$ is defined by $(g\cdot x)_i=x_{g^{-1}(i)}$.  For example, when inputs are images of size $h\times w$, we have $\ell=hw$ and $G$ can be a group that applies cyclic translations, or left-right reflections to an image. A function is called $G$-equivariant if $f(g\cdot x) = g\cdot f(x)$ for all $g\in G$. Similarly, a function $f$ is called $G$-invariant if $f(g\cdot x) = f(x)$ for all $g\in G$. 

\subsection{$G$-invariant networks } 
$G$-equivariant networks are a popular way to model $G$-equivariant functions. These networks are composed of several linear $G$-equivariant layers, interleaved with activation functions like ReLU, and have the following form: 
\begin{equation}
    \label{eq:equivariant}
    f = L_k\circ \sigma \circ L_{k-1} \dots \circ \sigma \circ L_1,
\end{equation}
Where $L_i:\R^{\ell\times d_i}\too \R^{\ell\times d_{i+1}}$ are linear $G$-equivariant layers, $d_i$ are the feature dimensions and $\sigma$ is a point-wise activation function. It is straightforward to show that this architecture results in a $G$-equivariant function. $G$-invariant networks are defined by adding an invariant layer on top of a $G$-equivariant function followed by a multilayer Perceptron (MLP), and have the form:
\begin{equation}
    \label{eq:invariant}
 g = m\circ \sigma \circ h \circ \sigma \circ f \,,
\end{equation}

where $h:\R^{\ell\times d_{k+1}}\too \R^{d_{k+2}}$ is a linear $G$-invariant layer and  $m:\R^{d_{k+2}}\too \R^{d_{k+3}}$ is an MLP. It can be readily shown that this architecture results in a $G$-invariant function. 

\subsection{Characterizing equivariant layers} 
The main building block of $G$-invariant/equivariant networks are linear $G$-invariant/equivariant layers. To implement these networks, one has to characterize the space of linear $G$-invariant/equivariant layers, namely, $L_i,h$ in Equations (\ref{eq:equivariant}-\ref{eq:invariant}). For example, it is well known that for images with the group $G$ of circular 2D translations, the space of linear $G$-equivariant layers is simply the space of all 2D convolutions operators \cite{puschel2008algebraic}. Unfortunately, such elegant characterizations are not available for most permutation groups.

Characterizing linear $G$-equivariant layers can be reduced to the task of solving a set of linear equations in the following way: We are looking for a linear operator  $L:\R^\ell \too \R^\ell $ that commutes with all the elements in $G$, namely: 
\begin{equation}\label{e:equivariance}
    L(g\cdot x)=g\cdot L(x),~ x\in \R^\ell, ~g\in G.
\end{equation}
Note that $L$ can be realized as a $\ell \times \ell$ matrix (which will be denoted in the same way), and as in \citet{maron2018invariant}, Equation \ref{e:equivariance} is equivalent to the following linear system: 
\begin{equation}\label{e:fixed_point}
    g\cdot L = L, ~g\in G,
\end{equation}
where $g$ acts on both dimensions of $L$. The solution space of Equation \ref{e:fixed_point} characterizes the space of all $G$-equivariant linear layers, or equivalently, defines a parameter sharing scheme on the layer parameters for the group $G$ \cite{wood1996representation,Ravanbakhsh2017}. We will denote the dimension of this space as $E(G)$. We note that in many important cases (\eg, \cite{zaheer2017deep,Hartford2018, maron2018invariant, albooyeh2019incidence}) $|G|$ is exponential in $\ell$ so it is not possible to solve the linear system naively, and one has to resort to other strategies.

\subsection{Deep Sets} 
Since the current paper generalizes \textit{DeepSets} \cite{zaheer2017deep}, we summarize their main results for completeness. Let $\set{x_1,\dots x_n} \subset{\R}$ be a set, which we represent in arbitrary order as a vector $x\in \R^n$. DeepSets characterized all $S_n$-equivariant layers, namely, all matrices $L\in \R^{n\times n}$ such that $g\cdot L(x)=L(g\cdot x) $ for any permutation $g\in S_n$ and have shown that these operators have the following structure: $L=\lambda I_n + \beta \ones\ones^T $. When considering sets with higher dimensional features, \ie, $x_i\in \R^d$ and $X\in \R^{n\times d}$, this characterization takes the form: 
\begin{equation}\label{e:deepsets}
L(X)_i= L_1(x_i) + L_2\left(\sum_{j\neq i}^n x_j\right), 
\end{equation}
where $L_1,L_2:\R^d \too \R^{d}$ are general linear functions and the subscript represents the $i$-th row of the output. The paper then suggests to concatenate several such layers, yielding a deep equivariant model (or an invariant model if a set pooling layer is added on top). \citet{zaheer2017deep,qi2017pointnet} established the universality of invariant networks that are composed of DeepSets Layers and \citet{segol2019universal} extended this result to the equivariant case.

\section{DSS layers}\label{s:uniform}

Our main goal is to design deep models for sets of elements with non-trivial per-element symmetries. In this section, we first formulate the symmetry group $G$ of such sets. The deep models we advocate are composed of linear $G$-equivariant layers (DSS layers), therefore, our next step is to find a simple and practical characterization of the space of these layers. 



\subsection{Sets with symmetric elements}
Let $\set{x_1,\dots x_n} \subset{\R^{d}}$ be a set of elements with symmetry group $H\leq S_d$. We wish to characterize the space of linear maps $L:\R^{n\times d}\too \R^{n\times d}$ that are equivariant to both the natural symmetries of the elements, represented by the elements of the group $H$, as well as to the order of the $n$ elements, represented by $S_n$.

In our setup, $H$ operates on all elements $x_i$ in the same way. More formally, the symmetry group is defined by $G=S_n \times H$, where $S_n$ is the symmetric group on $n$ elements. This group operates on $X\in \R^{n\times d}$ by applying the permutation $q\in S_n$ to the first dimension and the same element $h\in H$ to the second dimension, namely $\left((q,h)\cdot X\right)_{ij} = X_{q^{-1}(i)h^{-1}(j)}$. Figure \ref{fig:setup} illustrates this setup. 
\begin{wrapfigure}[16]{r}{0.2\textwidth}
\centering
    \vspace{-7pt}\includegraphics[width=0.11\textwidth]{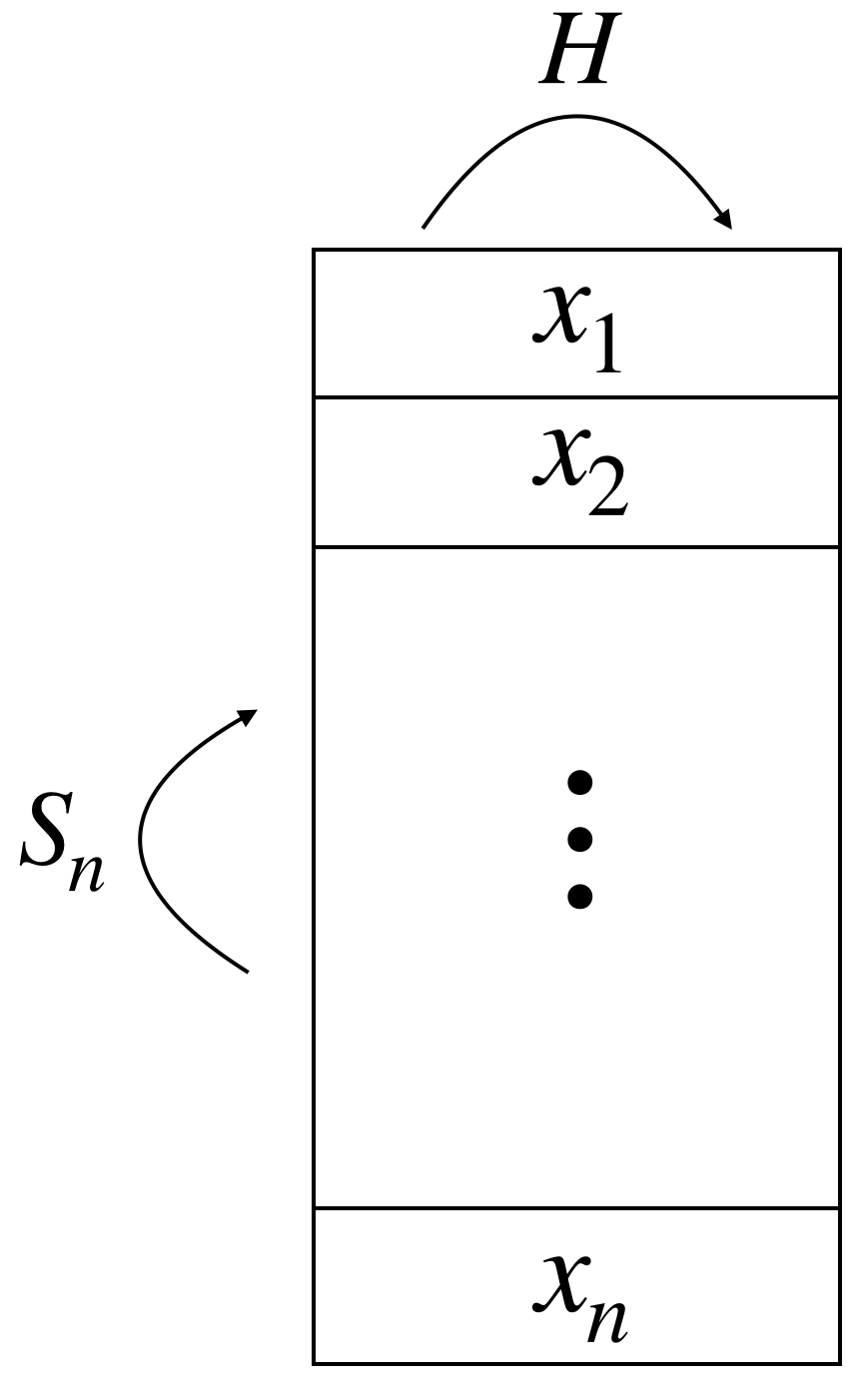}
    \caption{The input to a DSS layer is an $n\times d$ matrix, in which each row holds a $d$-dimensional element. $G=S_n\times H$ acts on it by applying a permutation to the columns and an element $h\in H$ to the rows.}
    \label{fig:setup}
\end{wrapfigure}
Notably, this setup generalizes several popular learning setups: (1) DeepSets, where $H=\{I_d\}$ is the trivial group. (2) Tabular data \cite{Hartford2018}, where $H=S_d$. (3) Sets of images, where $H$ is the group of circular translations \cite{aittala2018burst}.

One can also consider another setup, where the members of $H$ that are applied to each element of the set may differ. Section \ref{s:semidirect} of the supplementary material formulates this setup and characterizes the corresponding equivariant layers in the common case where $H$ acts transitively on $\set{1,\dots,d}$. While this setup can be used to model several interesting learning scenarios, it turns out that the corresponding equivariant networks are practically reduced to Siamese networks that were suggested in previous works.

\subsection{Characterization of equivariant layers }
This subsection provides a practical characterization of linear $G$-equivariant layers for $G=S_n\times H$. Our result generalizes DeepSets (\eqref{e:deepsets}) whose layers are tailored for $H=\set{I_d}$, by replacing the linear operators $L_1,L_2$ with linear $H$-equivariant operators. This result is summarized in the following theorem:

\begin{theorem}\label{thm:uniform}
    Any linear $G-$equivariant layer $L:\R^{n\times d}\too \R^{n\times d}$ is of the form 
    \[L(X)_{i}=L_1^H(x_i) + L_2^H\left(\sum_{j\neq i}^{n}x_j\right),\]
    where $L_1^H,L_2^H$ are linear $H$-equivariant functions 
\end{theorem}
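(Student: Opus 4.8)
The plan is to exploit the product structure $G=S_n\times H$, reducing $G$-equivariance to two separate and more tractable conditions. Since the subgroups $S_n\times\set{I_d}$ and $\set{I_n}\times H$ commute and together generate all of $G$, a linear map $L$ is $G$-equivariant if and only if it is equivariant to each of these two subgroups separately. First I would impose equivariance under $S_n\times\set{I_d}$, whose action $((q,I_d)\cdot X)_{ij}=X_{q^{-1}(i)j}$ is exactly a permutation of the $n$ rows of $X$, each viewed as an element of $\R^d$. This is precisely the setting of DeepSets, so by \eqref{e:deepsets} any such $L$ must have the form $L(X)_i = L_1(x_i) + L_2(\sum_{j\neq i} x_j)$ for some a priori unconstrained linear maps $L_1,L_2:\R^d\too\R^d$.

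Second, I would impose equivariance under the remaining subgroup $\set{I_n}\times H$, whose action $((I_n,h)\cdot X)_{ij}=X_{i,h^{-1}(j)}$ applies the same $h\in H$ to every row. Substituting the DeepSets form into the identity $L((I_n,h)\cdot X)=(I_n,h)\cdot L(X)$ and using linearity of $L_1,L_2$ together with the fact that $h$ acts identically on each row (so that $\sum_{j\neq i}(h\cdot x_j)=h\cdot\sum_{j\neq i}x_j$) yields the single requirement
\[L_1(h\cdot x_i)+L_2\Big(h\cdot\!\sum_{j\neq i}x_j\Big)=h\cdot L_1(x_i)+h\cdot L_2\Big(\sum_{j\neq i}x_j\Big),\]
which must hold for all $X$ and all $h\in H$. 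To decouple the two operators, I would plug in sparse inputs: taking $X$ supported on a single row $i$ forces $L_1(h\cdot v)=h\cdot L_1(v)$, i.e. $L_1$ is $H$-equivariant; taking $X$ supported on a single row $k\neq i$ (which requires $n\geq 2$) forces $L_2(h\cdot v)=h\cdot L_2(v)$, i.e. $L_2$ is $H$-equivariant. Setting $L_1^H=L_1$ and $L_2^H=L_2$ then gives the claimed form.

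Finally, I would verify the converse, that every $L$ of the stated form is $G$-equivariant. This is immediate: the DeepSets structure guarantees $S_n$-equivariance for arbitrary $L_1,L_2$, while the computation above run in reverse shows that $H$-equivariance of $L_1^H,L_2^H$ restores equivariance under $\set{I_n}\times H$; equivariance under the two generating subgroups yields equivariance under all of $G$.

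I expect the only genuinely delicate point to be the reduction step itself, namely justifying that $G$-equivariance is equivalent to the conjunction of $S_n$- and $H$-equivariance. This rests on $G$ being a direct (not semidirect) product, so that the two factor subgroups commute and a map equivariant to both is equivariant to the group they generate; the contrast with the non-uniform, semidirect-product setup mentioned in the paper is precisely what makes that setting collapse differently. Everything after the reduction is a short linear-algebra computation, with the only mild care needed being the choice of sparse inputs to separate the $L_1$ and $L_2$ constraints, along with the harmless $n\geq 2$ assumption.
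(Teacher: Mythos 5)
Your proof is correct, but it takes a genuinely different route from the paper's. The paper argues by dimension counting: Lemma~\ref{l:equi_dim} gives $E(G)=\frac{1}{|G|}\sum_{g\in G}\trace(P(g))^2$, multiplicativity of the trace under Kronecker products yields $E(S_n\times H)=E(S_n)\,E(H)=2E(H)$, and since the proposed family is a $2E(H)$-dimensional subspace (linear independence from disjoint supports) of the space of $G$-equivariant maps, the two spaces must coincide. You instead argue structurally: first impose equivariance under the row-permutation subgroup $S_n\times\set{I_d}$ and invoke the multi-feature DeepSets characterization (Equation~\ref{e:deepsets}) to get $L(X)_i=L_1(x_i)+L_2(\sum_{j\neq i}x_j)$ with a priori unconstrained $L_1,L_2$, then impose the $\set{I_n}\times H$ constraint and decouple it with single-row-supported inputs to force each of $L_1,L_2$ to be $H$-equivariant. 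This decoupling is sound — linearity gives $L_1(0)=L_2(0)=0$, so a single nonzero row isolates $L_1$ on its own output row and $L_2$ on the others — and your $n\geq 2$ caveat is harmless since for $n=1$ the $L_2$ term vanishes and the claim degenerates to the $H$-equivariant case. Your route is more elementary (no character/projection computation) and establishes necessity constructively rather than by a dimension squeeze, at the cost of needing the full functional form of the DeepSets characterization where the paper needs only $E(S_n)=2$; conversely, the paper's counting argument generalizes verbatim to products of arbitrary permutation groups (Theorem~\ref{thm:arbitrary_product} in the supplementary, via $E(H_1\times H_2)=E(H_1)E(H_2)$ and a tensor-product basis), whereas your approach requires an explicit closed-form characterization for one factor to get off the ground. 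One minor conceptual correction: your reduction does not actually rest on the two subgroups commuting — equivariance under any generating set propagates to the generated group, since $L(g_1g_2\cdot X)=g_1\cdot L(g_2\cdot X)=g_1g_2\cdot L(X)$. What the direct product really buys is that $G$ itself is small (the same $h\in H$ applied to every row); in the wreath-product setting of Section~\ref{s:semidirect} the symmetry group is genuinely larger, which — not any failure of the generation argument — is why the layer space collapses differently there.
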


Note that this is equivalent to the following formulations $L(X)_{i}=L_1^H(x_i) +  L_2^H(\sum_{j=1}^{n}x_j)= L_1^H(x_i) + \sum_{j=1}^{n} L_2^H(x_j)$ due to linearity, and we will use them interchangeably throughout the paper.  Figure \ref{fig:layer} illustrates Theorem \ref{thm:uniform} for sets of images. In this case, applying a DSS layer amounts to: (i) Applying the same convolutional layer $L_1$ to all images in the set (blue); (ii) Applying another convolutional layer  $L_2$  to the sum of all images (orange); and (iii) summing the outputs of these two layers. We discuss this theorem in the context of other widely-used data types such as point-clouds and graphs in section \ref{s:examples_and_generalizations} of the Supplementary material.

We begin the proof by stating a useful lemma, that provides a formula for the dimension of the space of linear $G$-equivariant maps:
\begin{lemma}\label{l:equi_dim}
Let $G \leq S_\ell$, then the dimension of the space of $G$-equivariant linear functions $L:\R^\ell\too \R^\ell$ is 
$$E(G)=\frac{1}{|G|}\sum_{g\in G}\trace(P(g))^2 ,$$
where $P(g)$ is the permutation matrix that corresponds to the permutation $g$.
\end{lemma}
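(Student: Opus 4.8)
The plan is to read Equation \ref{e:fixed_point} as the statement that $L$ lies in the \emph{invariant subspace} of a representation of $G$ on the space of matrices, and then invoke the classical fact that the dimension of such an invariant subspace equals the average of the character. Concretely, identify the space of linear maps $L:\R^\ell\too\R^\ell$ with $W=\R^{\ell\times\ell}$, on which $G$ acts by conjugation $\rho(g):L\mapsto P(g)\,L\,P(g)^{-1}$ (this is the ``acts on both dimensions'' action of Equation \ref{e:fixed_point}). Rewriting the equivariance condition in Equation \ref{e:equivariance} as $P(g)L=LP(g)$ for all $g$ shows it is equivalent to $\rho(g)L=L$ for all $g$, so the solution space is exactly the fixed subspace $W^G=\{L:\rho(g)L=L~\forall g\in G\}$ and $E(G)=\dim W^G$.

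Next I would introduce the averaging (Reynolds) operator $\Pi=\frac{1}{|G|}\sum_{g\in G}\rho(g)$ on $W$. Reindexing the sum gives $\rho(h)\Pi=\Pi$ for every $h\in G$, from which $\Pi^2=\Pi$, so $\Pi$ is a linear projection; its image is precisely $W^G$ (any $\Pi L$ is fixed by all $\rho(h)$, and every fixed $L$ satisfies $\Pi L=L$). Since the rank of an idempotent equals its trace, $\dim W^G=\trace(\Pi)=\frac{1}{|G|}\sum_{g\in G}\trace(\rho(g))$. This reduces the lemma to computing the single number $\trace(\rho(g))$, the trace of the conjugation map on matrices.

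Finally I would evaluate that trace. The linear map $A\mapsto MAN$ on $\R^{\ell\times\ell}$ has trace $\trace(M)\trace(N)$, as one checks directly in the basis of matrix units $E_{ij}$ (equivalently, via the Kronecker-product/vectorization identity). Taking $M=P(g)$ and $N=P(g)^{-1}$ gives $\trace(\rho(g))=\trace(P(g))\,\trace(P(g)^{-1})$. Because $P(g)^{-1}=P(g^{-1})$ and a permutation and its inverse fix the same number of points, $\trace(P(g^{-1}))=\trace(P(g))$, hence $\trace(\rho(g))=\trace(P(g))^2$. Combining this with the projection formula yields $E(G)=\frac{1}{|G|}\sum_{g\in G}\trace(P(g))^2$, as claimed.

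The main obstacle is modest: essentially the whole argument is the standard ``projection-onto-invariants has trace equal to its rank'' computation, and the only genuinely representation-theoretic step is the trace identity $\trace(\rho(g))=\trace(P(g))^2$. I expect that to be routine through the matrix-unit computation, the one mild point to get right being the self-duality of the permutation representation, i.e. $\trace(P(g))=\trace(P(g^{-1}))$, which makes the two factors equal and the character real.
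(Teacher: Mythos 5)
Your proposal is correct and follows essentially the same route as the paper: the paper's projection $\phi=\frac{1}{|G|}\sum_{g\in G}P(g)\otimes P(g)$ is precisely the vectorized form of your Reynolds operator $\Pi$ on the conjugation representation, and both arguments conclude via $\dim W^G=\trace$ of the projection together with the identity $\trace(P(g)\otimes P(g))=\trace(P(g))^2$. The only difference is presentational: you spell out the idempotency and rank-equals-trace steps that the paper delegates to a citation of \citet{fulton2013representation}, and you verify the character identity via matrix units rather than Kronecker multiplicativity, both of which are routine equivalences.
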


The proof is given in the supplementary material. Given this lemma we can now prove Theorem \ref{thm:uniform}:

\begin{proof}[Proof of Theorem \ref{thm:uniform}]
We wish to prove that all linear $G$-equivariant layers $L:\R^{n\times k}\too \R^{n\times k}$ are of the form $L(X)_i=L_1^H(x_i) + L_2^H(\sum_{j\neq i}^{n}x_j)$. Clearly, layers of this form are linear and equivariant. Moreover, the dimension of the space of these operators is exactly $2E(H)$ since we need to account for two linearly independent $H$-equivariant operators. The linear independence follows from the fact that their support in the matrix representation of $L$ is disjoint. On the other hand, using Lemma \ref{l:equi_dim}  we have:
\begin{align*} 
E(G) &= \frac{1}{|G|}\sum_{g\in G}\trace(P(g))^2 =\\
&=\frac{1}{|H|}\frac{1}{n!}\sum_{q\in S_n}\sum_{h\in H}\trace(P(q) \otimes P(h))^2\\
&=\frac{1}{|H|}\frac{1}{n!}\sum_{q\in S_n}\sum_{h\in H}\trace(P(q))^2 \trace(P(h))^2\\
&=\left(\frac{1}{|H|}\sum_{h\in H}\trace(P(h))^2\right)\cdot \left(\frac{1}{n!}\sum_{q\in S_n}\trace(P(q))^2\right)\\
&=E(H) E(S_n)=2E(H).
\end{align*}
Here we used the fact that the trace is multiplicative with respect to the Kronecker product as well as the fact that $E(S_n)=2$ (see \citep{zaheer2017deep} or Appendix 2 in \citep{maron2018invariant} for a generalization of this result).

To conclude, we have a linear subspace $\set{L \mid L(X)_{i}=L_1^H(x_i) + L_2^H(\sum_{j\neq i}^{n}x_j)}$, which is a subspace of the space of all linear $G$-equivariant operators, but has the same dimension, which implies that both spaces are equal.
\end{proof}

\paragraph{Relation to \cite{aittala2018burst,sridhar2019multiview}.} In the specific case of a set of images and translation equivariance, $L_i^H$ are convolutions. In this setting, \cite{aittala2018burst,sridhar2019multiview} have previously proposed using set-aggregation layers after convolutional blocks. The main differences between these studies and the current paper  are: (1) Our work applies to all types of symmetric elements and not just images; (2) We derive these layers from first principles; (3) We provide a theoretical analysis (Section \ref{s:universal}); (4) We apply an aggregation step at each layer instead of only after convolutional blocks.


\paragraph{Generalizations.} Section \ref{s:AppendixCharacterization} of the supplementary material generalizes Theorem \ref{thm:uniform} to equivariant linear layers with multiple features. It also generalizes to several  additional types of equivariant layers: $L:\R^{n\times d}\too \R$, $L:\R^{n\times d}\too \R^n$ and $L:\R^{n\times d}\too \R^d$. 

\paragraph{Product of arbitrary permutation groups.} See Section \ref{s:group_prod} of the supplementary material for further discussion and characterization of the space of equivariant maps for a product of arbitrary permutation groups. 

\section{A universal approximation theorem}\label{s:universal}
When restricting a network to be invariant (equivariant) to some group action, one may worry that these restrictions could reduce the network expressive power (see \citet{maron2019universality} or \citet{xu2018how} for concrete examples). We now show that networks that are constructed from DSS layers do not suffer from loss of expressivity. Specifically, we show that for any group $H$ that induces a \textit{universal} $H$-invariant (equivariant) network, its corresponding  $G$-invariant (equivariant) network \revision{has high expressive power: we prove universal approximation for invariant and equivariant functions defined on any compact set with zero intersection with a specific low-dimensional set $\mathcal{E}\subset \R^{n\times d}$. The precise definition of $\mathcal{E}$ is given in the supplementary. }

We first state a lemma, which we later use for proving our universal-approximation theorems. The lemma shows that one can uniquely encode orbits of a group $H$ in an invariant way by using a polynomial function. The full proof is given in Section \ref{s:AppendixProofs} of the supplementary material.
\begin{lemma}\label{l:encoding}
Let $H\leq S_d$ then there exists a polynomial function $u:\R^d\too \R^l$, for some $l\in \mathbb{N}$, for which $u(x)=u(y)$ if and only if $x=h\cdot y$ for some $h\in H$. 
\end{lemma}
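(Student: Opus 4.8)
The plan is to build $u$ out of the ring of $H$-invariant polynomials. Since $H\leq S_d$ is finite, it acts linearly on $\R^d$ by permutation matrices, and the invariant ring $\R[x_1,\dots,x_d]^H$ is finitely generated (Hilbert's finiteness theorem; by Noether one may in fact take generators of degree at most $|H|$). So I would fix a finite generating set $u_1,\dots,u_l$ and define $u=(u_1,\dots,u_l):\R^d\too\R^l$. The claim then splits into the two implications, and the content is showing that such a map separates exactly the $H$-orbits.

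The ``if'' direction is immediate: each $u_i$ is $H$-invariant, so if $x=h\cdot y$ then $u_i(x)=u_i(h\cdot y)=u_i(y)$ for every $i$, whence $u(x)=u(y)$.

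For the ``only if'' direction I would argue by contraposition, establishing that invariant polynomials already separate distinct orbits. Suppose $x$ and $y$ lie in distinct $H$-orbits $O_x$ and $O_y$; these are disjoint finite subsets of $\R^d$. By polynomial interpolation on a finite point set there is a polynomial $q$ with $q\equiv 1$ on $O_x$ and $q\equiv 0$ on $O_y$. Averaging over the group via the Reynolds operator, set $p(z)=\frac{1}{|H|}\sum_{h\in H} q(h\cdot z)$, which is $H$-invariant by construction. Since $h\cdot x\in O_x$ and $h\cdot y\in O_y$ for all $h\in H$, we get $p(x)=1$ and $p(y)=0$, so $p$ separates $x$ from $y$. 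Because $u_1,\dots,u_l$ generate the invariant ring, $p=F(u_1,\dots,u_l)$ for some polynomial $F$; hence $u(x)=u(y)$ would force $p(x)=p(y)$, a contradiction. Therefore $u(x)\neq u(y)$ whenever $x$ and $y$ lie in different orbits, which is the desired statement.

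The main obstacle is this ``only if'' direction: showing that $H$-invariants separate orbits \emph{and} packaging the construction into a single map of finite output dimension $l$. The separation itself rests on the two standard ingredients above — Reynolds averaging to manufacture an invariant, and interpolation on the two disjoint finite orbits — while finiteness of $l$ relies on finite generation of the invariant ring. The small points I would check carefully are that prescribed-value interpolation on a finite subset of $\R^d$ is always possible, and that ``different orbits'' is precisely the condition $x\notin O_y$ (equivalently $O_x\cap O_y=\emptyset$) that makes the interpolation and the averaging argument go through.
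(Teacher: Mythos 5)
Your proposal is correct and takes essentially the same route as the paper's own proof: both use finite generation of the invariant ring (Noether/Hilbert) to define $u$ from a generating set, note the ``if'' direction is immediate by invariance, and for the ``only if'' direction Reynolds-average a polynomial separating the two disjoint finite orbits, then derive the contradiction by expressing the invariant separator as a polynomial in the generators so that $u(x)=u(y)$ would force equal values. The single (minor) difference is how the separating polynomial is produced: the paper takes a continuous function with $f\leq -2$ on one orbit and $f\geq 2$ on the other and invokes Stone--Weierstrass to approximate it, while you interpolate exactly ($q\equiv 1$ on $O_x$, $q\equiv 0$ on $O_y$), which is valid --- multivariate interpolation on a finite point set is always possible, e.g.\ via products of affine functionals $z\mapsto \langle z, z_i - z_j\rangle$ distinguishing pairs of points --- and is in fact slightly more direct, since the averaged polynomial then attains the exact values $1$ and $0$ rather than inequalities with margins.
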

\begin{proof}[Proof idea]
This lemma is a generalization of Proposition 1 in \cite{maron2019provably} and we follow their proof. The main idea is that for any such group $H$ there exists a finite set of invariant polynomials whose values on $\R^d$ uniquely define each orbit of $H$ in $\R^d$. 
\end{proof}

\subsection{Invariant functions}
We are now ready to state and prove our first universal approximation theorem. As before, the full proof can be found in the supplementary material (Section \ref{s:AppendixProofs}). 
\begin{theorem}\label{thm:invariant_universality}
Let $K \subset \R^{n\times d}$ be a compact domain such that $ K=\cup_{g\in G}gK$ \revision{and $K\cap \mathcal{E}=\emptyset$}. $G$-invariant networks are universal approximators (in $\|\cdot \|_\infty$ sense) of continuous $G$-invariant functions on $K$ if $H$-invariant networks are universal\footnote{We assume that there is a universal approximation theorem for general continuous functions and fully connected networks   that use our activation functions, \eg, ReLU.}. 
\end{theorem}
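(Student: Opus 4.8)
The plan is to reduce the statement to the construction of a single \emph{complete $G$-invariant} feature map that a DSS-based invariant network can approximate, and then to close the argument by a Stone--Weierstrass / continuous-factorization step. Concretely, suppose we can produce a continuous $G$-invariant map $\rho:\R^{n\times d}\to\R^L$ that separates $G$-orbits on $K$ and that is approximable to arbitrary accuracy by the equivariant part (a stack of DSS layers) together with the $G$-invariant pooling layer. Then any continuous $G$-invariant target $f$ descends to a continuous function $\bar f$ on the compact set $\rho(K)$ with $f=\bar f\circ\rho$; approximating $\bar f$ by the final MLP $m$ (using the assumed universality of fully connected nets for general continuous functions) and $\rho$ by the lower network yields $f\approx m\circ\rho$, which is a $G$-invariant network. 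Thus the whole theorem rests on producing and realizing such a $\rho$.

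For the construction of $\rho$ I would proceed in two stages, mirroring the two factors of $G=S_n\times H$. First, collapse the $S_n$ symmetry using the multisymmetric power sums $p_\alpha(X)=\sum_{i=1}^n x_i^\alpha$ for multi-indices $\alpha$ of bounded degree (up to $n$); classically these generate the ring of $S_n$-invariant polynomials and separate $S_n$-orbits, so $P(X)=(p_\alpha(X))_\alpha$ is a complete $S_n$-invariant. The key observation is that $H\le S_d$ acts on the finite index set $\{\alpha\}$ by coordinate permutation $\alpha\mapsto h\cdot\alpha$, and under the resulting permutation representation $\tilde H\le S_N$ one has $P((q,h)\cdot X)=\tilde h\cdot P(X)$; hence $P$ is $S_n$-invariant and $H$-equivariant, with the residual symmetry on $P(X)\in\R^N$ being a genuine permutation group $\tilde H$. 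Second, quotient out this residual symmetry by applying Lemma~\ref{l:encoding} to $\tilde H$: let $u_{\tilde H}:\R^N\to\R^L$ encode $\tilde H$-orbits and set $\rho=u_{\tilde H}\circ P$. Then $\rho$ is $G$-invariant and separates $G$-orbits: if $\rho(X)=\rho(X')$ then $P(X')=\tilde h\cdot P(X)=P(h\cdot X)$ for some $h\in H$, and completeness of $P$ forces $X'=q\cdot(h\cdot X)=(q,h)\cdot X$. This is exactly where one must keep \emph{joint} rather than per-element information: encoding each $x_i$ separately would only identify orbits under the wreath product $H\wr S_n$ and would discard the ``single global $h$'' constraint, so the coupling of elements through the power sums is essential.

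It then remains to realize $\rho$ with a DSS network. Siamese application of polynomial feature maps is available (a DSS layer with aggregation block $L_2^H=0$ applies the same map to every row), while the cross-element products needed to assemble $u_{\tilde H}\circ P$ from the $p_\alpha$ are precisely what the DSS aggregation term supplies: broadcasting an aggregate $\sum_j\phi(x_j)$ back to each row and multiplying by a per-row feature realizes pair sums such as $p_\alpha p_\beta=\sum_{i,j}x_i^\alpha x_j^\beta$, and iterating builds arbitrary monomials in the $p_\alpha$. The $G$-invariant pooling followed by $m$ then assembles and post-processes these, with all genuinely nonlinear steps absorbed into the intermediate activations or into $m$ via general function approximation, so that only the assumed $H$-invariant (rather than $H$-equivariant) universality is ultimately invoked.

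I expect the main obstacle to be exactly the gap between $S_n\times H$ and the wreath product $H\wr S_n$, i.e.\ certifying that the realized features recover the single global element-symmetry rather than an independent copy per element. This is the role of the excluded low-dimensional set $\mathcal{E}$: on configurations where some element (or the relevant aggregate) sits at a point with nontrivial $H$-stabilizer, or where distinct elements become $H$-indistinguishable, the global $h$ cannot be pinned down and orbit separation degrades. Restricting to $K$ with $K\cap\mathcal{E}=\emptyset$ (while keeping $K=\bigcup_{g\in G}gK$ so the quotient is well behaved) removes these degeneracies; making the separation and realization arguments precise on $K\setminus\mathcal{E}$, and verifying that the exceptional locus is indeed low-dimensional, is the delicate part.
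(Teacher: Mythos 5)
Your orbit-separation construction is correct and genuinely different from the paper's: the paper never forms the joint multisymmetric power sums $p_\alpha(X)=\sum_i x_i^\alpha$. Instead it broadcasts the single aggregate $\sum_j x_j$ to every element as an extra channel, encodes each augmented element $[x_i,\sum_j x_j]$ by a per-element $H$-invariant descriptor (Lemma \ref{l:encoding} applied to $H$ acting diagonally on $\R^{d\times 2}$), sum-encodes the resulting multiset, and proves injectivity on orbits (Lemma \ref{l:G_encoding}) by using the hypothesis $K\cap\mathcal{E}=\emptyset$ --- i.e.\ that $\sum_i x_i$ has $d$ distinct coordinates --- to synchronize the per-element $H$-ambiguities into one global $h$. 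Your route ($S_n$ first via $P$, then Lemma \ref{l:encoding} applied to the induced permutation group $\tilde H\leq S_N$) resolves the same wreath-product-versus-$G$ issue by coupling the elements inside $P$, and it separates $G$-orbits on all of $\R^{n\times d}$ with no exceptional set. Your reading of $\mathcal{E}$ is therefore slightly off: in the paper it is not where orbit separation intrinsically degenerates, but exactly where the sum-channel synchronization device fails.

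The genuine gap is the realization step: you have not shown, and as sketched it does not hold, that $\rho=u_{\tilde H}\circ P$ is approximable by a $G$-invariant DSS network under the stated assumption. The hidden per-element representations of a DSS network live in $\R^{d\times f}$ with $H$ acting diagonally on the $d$-axis, i.e.\ in direct sums of copies of the defining permutation representation (plus invariant channels); the Siamese part of a DSS layer must be $H$-equivariant between such spaces, so ``a DSS layer with $L_2^H=0$ applies the same map to every row'' does not license the map $x\mapsto(x^\alpha)_\alpha$, which intertwines the defining representation with the permutation representation of $\tilde H$ on $\R^N$. For multi-indices of support size at least two this target representation generally contains irreducible components absent from $(\R^d)^{\oplus f}$ --- e.g.\ for $H=S_d$, $d\geq 4$, the module on degree-two multi-indices contains the irreducible indexed by $(d-2,2)$, which does not occur in any number of copies of $\R^d$ --- so no stack of DSS layers can carry $P(X)$ equivariantly, even approximately. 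Nor can you defer $u_{\tilde H}$ to the final MLP $m$: that MLP acts after the invariant pooling, but $P(X)$ is only $\tilde H$-equivariant, not $G$-invariant, so it is not available there. Your broadcast-and-multiply trick does yield products such as $p_\alpha p_\beta$ once the $p_\alpha$ exist as channels, but producing channels that hold the individual mixed monomials $x_i^\alpha$ at $H$-equivariantly consistent positions is exactly what the architecture forbids; asserting instead that the pooled features ``determine'' $\rho$ begs the question, since approximability of an arbitrary complete $G$-invariant is essentially the theorem itself. This is the problem the paper's construction is engineered around: every step it uses is either a literal DSS operation (broadcasting $\sum_j x_j$), a per-element $H$-invariant network (covered by the universality hypothesis), a channelwise MLP, or a set sum --- at the price of excluding $\mathcal{E}$. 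To rescue your route you would need to re-derive $u_{\tilde H}\circ P$ from per-element $H$-invariant functions of synchronized inputs, which collapses back to the paper's argument.
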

\begin{proof}[Proof idea]
Let $f:K\too \R$ be a continuous $G$-invariant function we wish to approximate. The idea of the proof is as follows: \revision{(1) We compute an $S_n$-invariant set descriptor $\sum_{j=1}^n x_j$ and concatenate it to all elements as an extra feature channel. Later on, we compute an $H$-invariant representation for each set elements separately, thus lose information regarding their relative position. This shared descriptor will allow us to synchronize these individual invariant representations into a global $G$-invariant representation. The excluded set $\mathcal{E}$ is the set of points where this descriptor is degenerate and does not allow us to correctly synchronize the separate elements.}  (2) we encode each element $x_i$ \revision{and the extra descriptor $\sum_{j=1}^n x_j$} with a unique $H$-invariant polynomial descriptor $u_H(x_i,\sum_{j=1}^n x_j)\in \R^{l_H}$ (3) we encode the resulting set of descriptors with a unique $S_n$-invariant polynomial set descriptor\revision{ $u_{S_n}\left(\{u_{H}(x_i,\sum_{j=1}^n x_j) \}_{i\in [n]}\right)\in \R^{l_{S_n}}$ (4) we map the unique set descriptor $u_{S_n}\left(\{u_H(x_i,\sum_{j=1}^n x_j) \}_{i\in [n]}\right)$} to the appropriate value defined by $f$ (5) we use the classic universal approximation theorem \cite{cybenko1989approximation, hornik1989multilayer} and our assumption on the universality of $H$-invariant networks to conclude that there exists a $G$-invariant network that can approximate each one of the previous stages to arbitrary precision on $K$.
\end{proof}

\paragraph{Relation to Siamese networks.} \revision{If we omit step (1) in the proof of Theorem \ref{thm:invariant_universality}, i.e., calculating the sum of all the inputs and concatenating it to each element, the proof implies that simple Siamese architectures that apply an $H$-invariant network to each element in the set followed by a sum aggregation and finally an MLP,  are universal with respect to the wreath-product symmetries (see discussion at section \ref{s:semidirect} of the supplementary material), which is a strictly larger group than $G$ (assuming $H$ is not trivial). Since $G$-invariant networks can simulate any such Siamese network, Theorem \ref{thm:invariant_universality} implies that $G$-invariant networks are strictly more expressive then Siamese networks. This is because $G$-invariant networks can distinguish different $G$-orbits that fall into the same wreath-product orbits }. In section \ref{s:exp}, we compare the Siamese architecture to our DSS networks and show that DSS-based architectures perform better in practice on several tasks.

\paragraph{Relation to \cite{maron2019universality}.} The authors proved that for any permutation group $G$, $G$-invariant networks have a universal approximation property, if the networks are allowed to use high-order tensors as intermediate representations (\ie, $X\in \R^{d^l}$ for $2 \leq l\leq n^2$), which are computationally prohibitive. We \revision{provide a complementary} result by proving that if first-order \footnote{First-order networks use only first-order tensors.} $H$-invariant networks are universal, so are first-order $G$-invariant networks. 

\subsection{Equivariant functions}
Three possible types of equivariant functions can be considered. First, functions of the form $f:\R^{n\times d}\too \R^n$. For example, such a function can model a selection task in which we are given a set $\{x_1,\dots,x_n\}$ and we wish to select a specific element from that set. Second, functions of the form $f:\R^{n\times d}\too \R^d$. An example for this type of functions would be an image-deblurring task in which we are given several noisy measurements of the same scene and we wish  to generate a single high quality image (\eg, \cite{aittala2018burst}). Finally,  functions of the form $f:\R^{n\times d}\too \R^{n\times d}$. This type of functions can be used to model tasks such as image co-segmentation where the input consists of several images and the task is to predict a joint segmentation map.

In this subsection we will prove a similar universality result for the third type of $G$-equivariant functions that were mentioned above, namely $f:\R^{n\times d}\too \R^{n\times d}$. We note that the equivariance of the first and second types can be easily deduced from this case. One can transform, for example, an  $\mathbb{R}^{n\times d}\too \R^{d}$ $G$-equivariant function into a $\R^{n\times d}\too \R^{n\times d}$ function by repeating the $\R^d$ vector $n$ times and use our general approximation theorem on this function. We can get back a  $\mathbb{R}^{n\times d}\too \R^{d}$ function by averaging over the first dimension.

\begin{theorem}\label{thm:equivariant_universality}
Let $K \subset \R^{n\times d}$ be a compact domain such that $ K=\cup_{g\in G}gK$ \revision{and $K\cap \mathcal{E}=\emptyset$}. $G$-equivariant networks are universal approximators (in $\|\cdot \|_\infty$ sense) of continuous $\R^{n\times d}\too \R^{n\times d}$ $G$-equivariant functions on $K$ if $H$-equivariant networks are universal.
\end{theorem}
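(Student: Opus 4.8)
The plan is to reduce the equivariant statement to the invariant one (Theorem~\ref{thm:invariant_universality}) by exhibiting a simple $G$-equivariant ansatz that is both expressive enough and directly realizable by DSS layers. Let $F:K\too\R^{n\times d}$ be the continuous $G$-equivariant map we wish to approximate. First I would reduce the target class to $G$-equivariant polynomial maps: by the classical Stone--Weierstrass theorem pick a (not necessarily equivariant) polynomial map $P$ that is $\eps/2$-close to $F$ on $K$, and then symmetrize it, $\bar P(X)=\tfrac1{|G|}\sum_{g\in G} g^{-1} P(g\cdot X)$; since $F$ is equivariant and $K=\cup_{g\in G} gK$ is $G$-invariant, $\bar P$ is $G$-equivariant and still $\eps/2$-close to $F$. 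Hence it suffices to realize an arbitrary $G$-equivariant polynomial map by a $G$-equivariant network up to $\eps/2$.

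The heart of the argument is the following ansatz for the $i$-th output row:
\[
F(X)_i \;\approx\; \Psi\!\left(x_i,\ \textstyle\sum_{j=1}^n \phi(x_j)\right),
\]
where $\phi:\R^d\too\R^{m}$ and $\Psi:\R^{d}\times\R^{m}\too\R^{d}$ are \emph{$H$-equivariant} (with $H$ acting on the aggregate $\sum_j\phi(x_j)$ through the output representation of $\phi$). A direct check shows that any map of this form is automatically $G$-equivariant: the aggregate $\sum_j\phi(x_j)$ is $S_n$-invariant and $H$-equivariant, so applying $(q,h)\in G$ permutes the anchors $x_i$ by $q$ and transforms both $x_i$ and the aggregate by $h$, whence $\Psi$ reproduces exactly the correct $h$-twist and $q$-permutation of the outputs. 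Moreover this ansatz is \emph{precisely} a composition of DSS layers: $\phi$ and $\Psi$ are per-element $H$-equivariant (Siamese) networks, which are DSS layers with vanishing aggregation term ($L_2^H=0$ in Theorem~\ref{thm:uniform}), while the summation $\sum_j\phi(x_j)$ concatenated back to every element is exactly the aggregation term ($L_1^H=0$).

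It remains to argue density of this ansatz and to replace $\phi,\Psi$ by networks. Mirroring the proof of Theorem~\ref{thm:invariant_universality}, I would let $\phi$ carry two kinds of feature channels: $H$-invariant moments $\sum_j u_H(x_j)$ (via Lemma~\ref{l:encoding}), which pin down the multiset of $H$-orbits up to the wreath-product ambiguity, together with $H$-equivariant moments that supply a common $H$-frame and refine this encoding down to the diagonal $H$-action. With finitely many such channels (finite generation of the invariant ring and of the equivariant module over the finite group $H$) the aggregate determines $X$ up to $G$, so a continuous $H$-equivariant $\Psi$ recovering $F(X)_i$ exists; I then approximate $\phi$ and $\Psi$ by $H$-equivariant networks using the hypothesis, invoke the classical universal approximation theorem, and combine all estimates by the triangle inequality exactly as in the invariant case.

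The main obstacle, and the source of the hypothesis $K\cap\mathcal{E}=\emptyset$, is precisely this synchronization step: recovering a globally consistent, diagonally-$H$-equivariant output from data aggregated invariantly across elements requires the shared equivariant descriptor $\sum_j\phi(x_j)$ to be non-degenerate, so that a single $H$-frame can be consistently attached to all elements. The locus where this descriptor collapses (where the equivariant moments fail to fix the frame, so that $H$-inequivalent configurations become indistinguishable) is exactly the low-dimensional excluded set $\mathcal{E}$; establishing that the encoding is injective up to $G$ off $\mathcal{E}$, and that the reconstructing $\Psi$ can be taken continuous there, is the delicate part of the argument, whereas the translation into DSS layers and the final $\eps$-bookkeeping are routine given Theorems~\ref{thm:uniform} and \ref{thm:invariant_universality}.
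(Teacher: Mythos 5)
Your route is genuinely different from the paper's, and it has a gap at its crux. The paper does not use a DeepSets-style factorization $F(X)_i\approx\Psi\bigl(x_i,\sum_j\phi(x_j)\bigr)$ at all: it first reduces to a $G$-equivariant polynomial $P$, then expands each output row as $P(X)_i=\sum_k \alpha_k(x_1,\dots,x_{i-1},x_{i+1},\dots,x_n)\,q_k(x_i)$, where $\{q_k\}$ is a finite basis of the bounded-degree $H$-equivariant polynomials in one element (this is where polynomiality is essential, via Lemma \ref{lemma:s_n-1}) and the coefficients $\alpha_k$ are $S_{n-1}\times H$-invariant. The $\alpha_k$ are then approximated by a leave-one-out modification of Theorem \ref{thm:invariant_universality} --- which is exactly why the paper's $\mathcal{E}$ is built from sums over subsets of size $n-1$ --- while the $q_k$ are approximated per element using the hypothesis on $H$, and an MLP on the channel dimension performs the multiplication and summation. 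Your symmetrization reduction to equivariant polynomials is fine (the paper instead cites the density lemma of Segol--Lipman), and your observation that your ansatz is automatically $G$-equivariant and DSS-realizable is correct, including the tacit point that duplicate rows force equal outputs so $\Psi$ can be well defined from $(x_i,\text{aggregate})$ alone.

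The genuine gap is the injectivity claim: that finitely many \emph{one-round} pooled per-element moments $\sum_j\phi(x_j)$, with $\phi$ depending only on $x_j$, determine $X$ up to $G$ off a low-dimensional set. You assert this and defer it as ``the delicate part,'' but it cannot be obtained by citing the paper's lemmas: the paper's injectivity result (Lemma \ref{l:G_encoding}) applies to an encoding in which each element is coupled to the broadcast global sum \emph{before} invariantization --- it pools $p\bigl(u_H([x_i,\sum_j x_j])\bigr)$, not $p\bigl(u_H(x_i)\bigr)$ --- and the synchronization argument there leans on that per-element coupling together with the distinct-coordinates condition on the shared descriptor. With your one-round $\phi$, the invariant channels only recover the multiset of $H$-orbits (wreath-product information) and the equivariant channels only give pooled moments such as elementwise power sums, and it is not obvious (and you do not show) that these jointly separate configurations modulo the \emph{diagonal} $H$-action outside a low-codimension locus for general $H\le S_d$; nor do you show your degeneracy locus is low-dimensional, and in any case it is a different set from the paper's $\mathcal{E}$ (non-degeneracy of all $(n-1)$-subset sums neither implies nor is implied by non-degeneracy of the full sum), so at best you would prove a variant of Theorem \ref{thm:equivariant_universality} with a redefined excluded set. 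The natural repair inside your framework is to make $\phi$ a two-round DSS computation --- first broadcast $\sum_j x_j$ with one aggregation layer, then form per-element pair invariants $u_H([x_j,\sum_k x_k])$ and pool their polynomial moments --- after which Lemma \ref{l:G_encoding} supplies exactly the injectivity-modulo-$G$ you need, and your continuous equivariant selection of $\Psi$ (via the quotient-space compactness argument, as in Lemma \ref{l:compo_cont}, plus averaging for an equivariant extension) goes through; this would yield a legitimate alternative to the paper's basis-decomposition proof, trading its leave-one-out invariant coefficients for a single global synchronization.
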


\begin{proof} [Proof idea] The proof follows a similar line to the universality proof in \cite{segol2019universal}: First, we use the fact that equivariant polynomials are dense in the space of continuous equivariant functions. This enables us to assume that the function we wish to approximate is a $G$-equivariant polynomial. Next we show that for every output element, the mapping $\R^{n\times d}\too \R^d$ can be written as a sum of $H$-equivariant base polynomials with invariant coefficients. The base polynomials can be approximated by our assumption on $H$ and the invariant mappings can be approximated by leveraging a slight modification of theorem \ref{thm:invariant_universality}. Finally we show how we can combine all the parts and approximate the full function with a $G$-equivariant network. 
\end{proof}

The full proof is given in Section \ref{s:AppendixProofs} of the supplementary material. 

\subsection{Examples}
We can use Theorems  (\ref{thm:invariant_universality}-\ref{thm:equivariant_universality}) to show that DSS-based networks are universal in two important cases. For tabular data, which was considered by \citet{Hartford2018}, the symmetries are $G=S_n\times S_d$. From the universality of $S_n$-invariant and equivariant networks \cite{zaheer2017deep, segol2019universal} we get that $G$-invariant (equivariant) networks are universal as well\footnote{ \citet{Hartford2018} also considered interactions between more than two sets with $G=S_n\times S_{d_1}\times \dots \times S_{d_k}$. Our theorems can be extended to that case by induction on $k$.}. For sets of images, when $H$ is the group of circular translations, it was shown in \citet{yarotsky2018universal} that $H$-invariant/equivariant networks are universal\footnote{We note that this paper considers convolutional layers with full size kernels and no pooling layers}, which implies universality of our DSS models. 



\section{Experiments}\label{s:exp}
In this section we investigate the effectiveness of DSS layers in practice, by comparing them to previously suggested architectures and different aggregation schemes. We use the experiments to answer two basic questions: \textbf{(1)~\textit{Early or late aggregation?}} Can \emph{early aggregation} architectures like DSS and its variants improve learning compared to \emph{Late aggregation} architectures, which fuse the set information at the end of the data processing pipeline? and \textbf{(2)~\textit{How to aggregate?}} What is the preferred early aggregation scheme?

\paragraph{Tasks.} We evaluated DSS in a series of six experiments spanning a wide range of tasks: from classification ($\R^{n\times d}\too \R$), through selection ($\R^{n\times d}\too \R^{n}$) and burst image deblurring  ($\R^{n\times d}\too \R^{d}$) to general equivariant tasks ($\R^{n\times d}\too \R^{n\times d}$). The experiments also demonstrate the applicability of DSS to a range of data types, including point-clouds, images and graphs. Figure \ref{fig:exp} illustrates the various types of tasks evaluated. A detailed description of all tasks, architectures and datasets is given in the supplementary material (Section \ref{s:implementation_details}).

\paragraph{Competing methods.}
We compare DSS to four  other models: (1) MLP; (2) DeepSets (DS) \cite{zaheer2017deep}; (3) Siamese network; (4) Siamese network followed by DeepSets (Siamese+DS). 

We also compare several variants of our DSS layers:  \textbf{(1)~DSS(sum)}: our basic DSS layer from Theorem \ref{thm:uniform} \textbf{(2)~DSS(max)}: DSS with max-aggregation instead of sum-aggregation  \textbf{(3)~DSS(Aittala)}: DSS with the aggregation proposed in  \cite{aittala2018burst}, namely, $L(x)_i \mapsto [L^H(x_i),\max_{j=1}^n L^H(x_j)]$ where $[]$ denotes feature concatenation and $L^H$ is a linear  $H$-equivariant layer \textbf{(4)~DSS(Sridhar)}: DSS layers with the aggregation proposed in  \cite{sridhar2019multiview} ,\ie,  $L(x)_i\mapsto L^H(x_i)- \frac{1}{n} \sum_{j=1}^n L^H(x_j)$.

\paragraph{Evaluation protocol.} For a fair comparison, for each particular task, all models have roughly the same number of parameters. In all experiments, we report the mean and standard deviation over 5 random initializations. Experiments were conducted using NVIDIA DGX with V100 GPUs.
\begin{figure}[t] 
    \centering
    \includegraphics[width=0.9\columnwidth]{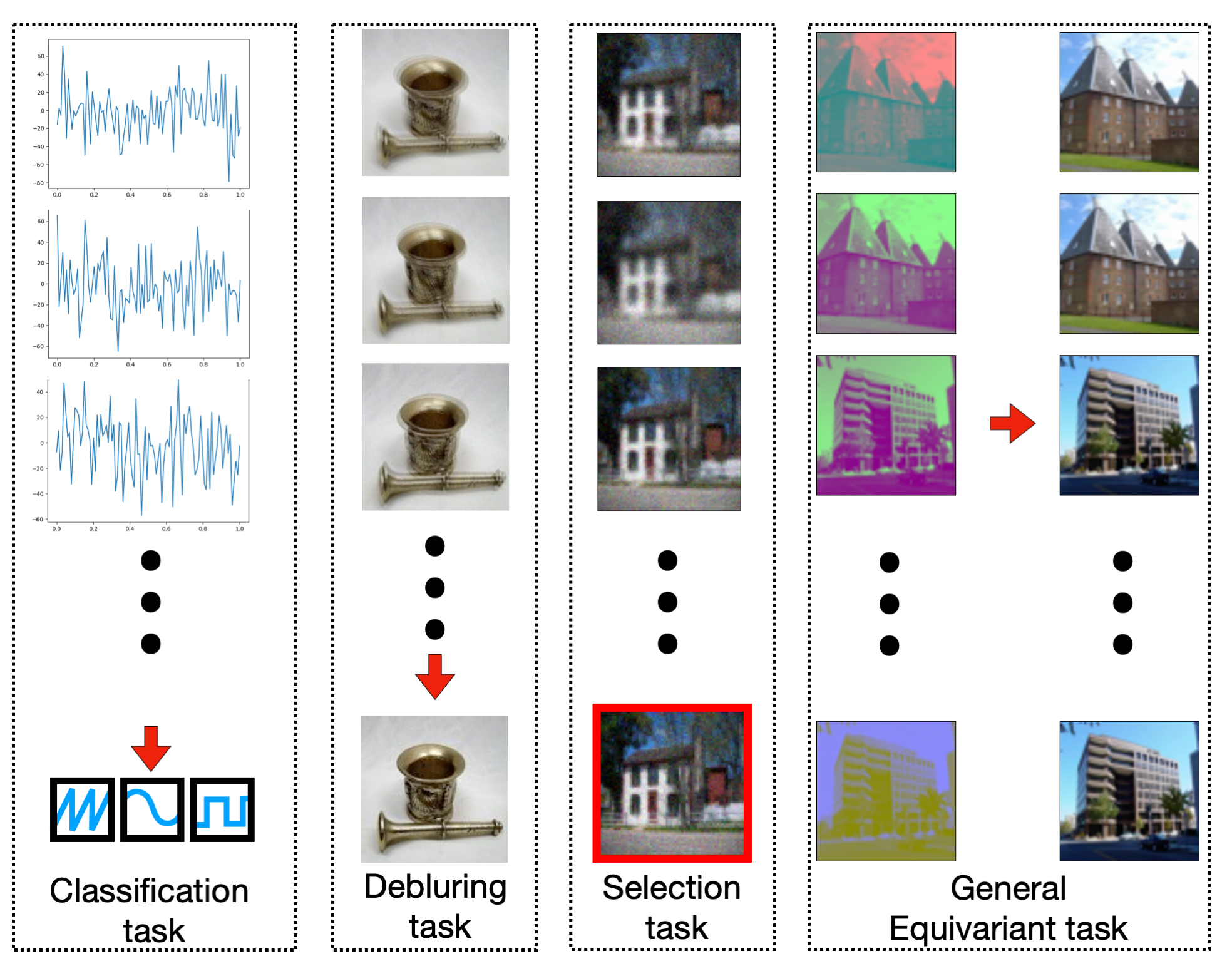}
    \caption{We consider all possible types of invariant and equivariant learning tasks in our settings: classification ($\R^{n\times d}\too \R$), selection ($\R^{n\times d}\too \R^{n}$), merging ($\R^{n\times d}\too \R^{d}$) and general equivariant tasks ($\R^{n\times d}\too \R^{n\times d}$). }
    \label{fig:exp}
\end{figure}

\subsection{Classification with multiple measurements}

\begin{table*}[th]
  \centering
  \scriptsize
    \begin{tabular}{|l|l|l|llll|c|}
        \hline
         \multirow{2}{*}{Dataset} & \multirow{2}{*}{Data type} & \multicolumn{1}{|c|}{Late Aggregation} & \multicolumn{4}{|c|}{Early Aggregation}& \multirow{2}{*}{Random choice}\\
                  & & Siamese+DS & DSS (sum)& DSS (max) &DSS (Sridhar) & DSS (Aittala)& \\
        \hline
        UCF101 & Images & 36.41\% $\pm$ 1.43 &  76.6\%  $\pm$ 1.51&76.39\% $\pm$ 1.01 &60.15\% $\pm$ 0.76&\textbf{77.96\%} $\pm$ 1.69&12.5\% \\ 
        Dynamic Faust  & Point-clouds & 22.26\% $\pm$ 0.64 &  {42.45\%}  $\pm$ 1.32& 28.71\% $\pm$ 0.64 & \textbf{54.26}\% $\pm$ 1.66 &26.43\% $\pm$ 3.92 &14.28\%\\ 
        Dynamic Faust  & Graphs & 26.53\% $\pm$ 1.99 &  44.24\%  $\pm$ 1.28&30.54\% $\pm$ 1.27 &\textbf{53.16}\% $\pm$ 1.47 &26.66\% $\pm$ 4.25 &14.28\%\\ 
        \hline
    \end{tabular}%
    \caption{Frame  selection  tasks for  images, point-clouds and graphs. Numbers represent average classification accuracy.}        \label{t:first_frame}
\end{table*}

To illustrate the benefits of DSS, we first evaluate it in a signal-classification task using a synthetic dataset that we generated. Each sample consists of a set of $n=25$ noisy measurements of the same 1D periodic signal sampled at 100 time-steps (see Figure \ref{fig:exp}). The clean signals are sampled uniformly from three signal types - sine, saw-tooth and square waves - with varying amplitude, DC component, phase-shift and frequency. The task is to predict the signal type given the set of noisy measurements. Figure \ref{fig:signal} depicts the classification accuracy as a function of varying training set sizes, showing that DSS(sum) outperforms all other methods. Notably, DSS(sum) layers achieve significantly higher accuracy then the DeepSets architecture which takes into account the set structure but not within-element symmetry. DSS(sum) also outperforms the the \textit{Siamese} and \textit{Siamese+DS} architectures, which do not employ early aggregation. \textit{DSS(Sridhar)} fails, presumably  because it employs a mean removal aggregation scheme which is not appropriate for this task (removes the signal and leaves the noise). 


\begin{figure}[t] 
\centering
    \includegraphics[width=\columnwidth]{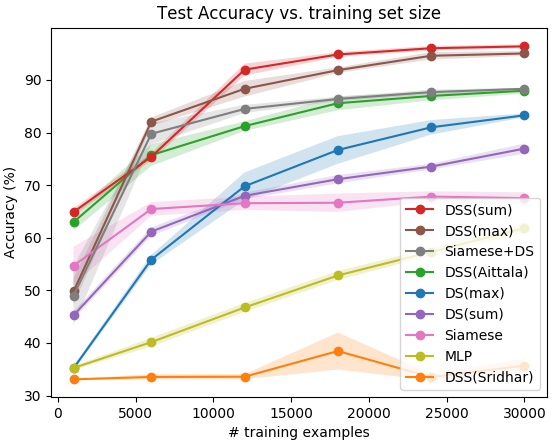}
    \caption{Comparison of set learning methods on the signal classification task. Shaded area represents standard deviation.}
    \label{fig:signal}
\end{figure}

\begin{table*}[t]
  \scriptsize
  \centering
    \begin{tabular}{|l|l|llll|c|}
  \hline
   \multirow{2}{*}{Noise type and strength}  & \multicolumn{1}{|c|}{Late Aggregation} & \multicolumn{4}{|c|}{Early Aggregation}& \multirow{2}{*}{Random choice}\\
          & Siamese+DS    & DSS (sum) & DSS (max) & DSS (Sridahr) & DSS (Aittala) &\\
  \hline
    Gaussian $\sigma=10$ & 77.2\% $\pm$ 0.37 & \textbf{78.48\%} $\pm$ 0.48&77.99\% $\pm$ 1.1 &76.8\% $\pm$ 0.25 & 78.34\% $\pm$ 0.49 &5\% \\
    Gaussian $\sigma=30$ & 65.89\% $\pm$ 0.66 & \textbf{68.35\%} $\pm$ 0.55&67.85\% $\pm$ 0.40 &61.52\% $\pm$ 0.54 & 66.89\% $\pm$ 0.58 &5\% \\
    Gaussian $\sigma=50$ & 59.24\% $\pm$ 0.51 & \textbf{62.6\%} $\pm$ 0.45&61.59\% $\pm$ 1.00 &55.25\% $\pm$ 0.40 &62.02\% $\pm$ 1.03 &5\% \\
      \hline

    Occlusion $10\%$ & 82.15\% $\pm$ 0.45 & 83.13\%$\pm$  1.00& \textbf{83.27} $\pm$ 0.51 &83.21\% $\pm$ 0.338 & 83.19\% $\pm$  0.67 &5\% \\
    Occlusion $30\%$ & 77.47\% $\pm$ 0.37 & 78\% $\pm$ 0.89&78.69\% $\pm$ 0.32 &\textbf{78.71\%} $\pm$ 0.26 &78.27\% $\pm$ 0.67 &5\% \\
    Occlusion $50\%$ & 76.2\% $\pm$ 0.82 & \textbf{77.29\%} $\pm$ 0.40& 76.64\% $\pm$ 0.45 &77.04\% $\pm$ 0.75 &77.03\% $\pm$ 0.58 &5\%  \\
  \hline
    \end{tabular}%
\caption{Highest-quality image selection. Values indicate the mean accuracy.}
  \label{t:quality_and _ordering}%

\end{table*}%

\begin{table*}[h]
\centering
  \scriptsize
    \begin{tabular}{|l|l|llll|c|}
    \hline 
   \multirow{2}{*}{Task}  & \multicolumn{1}{|c|}{Late Aggregation} & \multicolumn{4}{|c|}{Early Aggregation}& \multirow{2}{*}{TP}\\
          & Siamese+DS    & DSS (sum) & DSS (max) & DSS (Sridahr) & DSS (Aittala) &\\
    \hline
    Color matching (places) & 8.06 $\pm$ 0.06 & 1.78 $\pm$ 0.03&  1.92 $\pm$ 0.07    &1.97 $\pm$ 0.02&\textbf{1.67} $\pm$ 0.06& 14.68 \\
    Color matching (CelebA) &  6 $\pm$ 0.13 & 1.27 $\pm$ 0.07&1.34 $\pm$ 0.07 &1.35 $\pm$ 0.03&\textbf{1.17} $\pm$ 0.04& 18.72 \\
    \hline
    Burst deblurring (Imagenet) &  6.15 $\pm$ 0.05 & 6.11 $\pm$ 0.08& 5.87 $\pm$ 0.05 &21.01 $\pm$ 0.08&\textbf{5.7} $\pm$ 0.13& 16.75 \\
    \hline
    \end{tabular}%
    \caption{Color-channel matching and burst deblurring tasks. Values indicate mean absolute error per pixel over the test set where the pixel values are in $[0,255]$. TP stands for the trivial grey-scale predictor.}
  \label{t:color}%
\end{table*}

\subsection{Selection tasks}
We next test DSS layers on selection tasks. In these tasks, we are given a set and wish to choose one element of the set that obeys a predefined property. Formally, each task is modelled as a $G$-equivariant function $f:\R^{n\times d}\too \R^n$, where the output vector represents the probability of selecting each element. The architecture comprises of three convolutional blocks employing Siamese or DSS variants, followed by a DeepSets block. We note that the \textit{Siamese+DS} model was suggested for similar selection tasks in \cite{zaheer2017deep}. 
\vspace{-15pt}
\paragraph{Frame selection in images and shapes.} The first selection task is to find a particular frame within an unordered set of frames extracted from a video/shape sequence. For videos, we used the UCF101 dataset \cite{soomro2012ucf101}. Each set contains $n=8$ frames that were generated by randomly drawing a video, a starting position and frame ordering. The task is to select the "first" frame, namely, the one that appeared earliest in the video.  Table \ref{t:first_frame} details the accuracy of all compared methods in this task, showing that \textit{DSS(sum)} and \textit{DSS(Aittala)} outperform \textit{Siamese+DS} and \textit{DSS(Sridhar)} by a large margin.

In a second selection task, we demonstrate that DSS can handle multiple data types. Specifically, we showcase how DSS operates on point-clouds and graphs. Given a short sequences of 3D human shapes preforming various activities, the task is to identify which frame was the center frame in the original non-shuffled sequence.  These human shapes are represented as point-clouds in the first experiment and as graphs (point-clouds + connectivity) in the second. 

\begin{figure}[ht]
\centering
    \vspace*{-0cm}
    \includegraphics[width=0.98\linewidth,trim=0 10 0 20,clip]{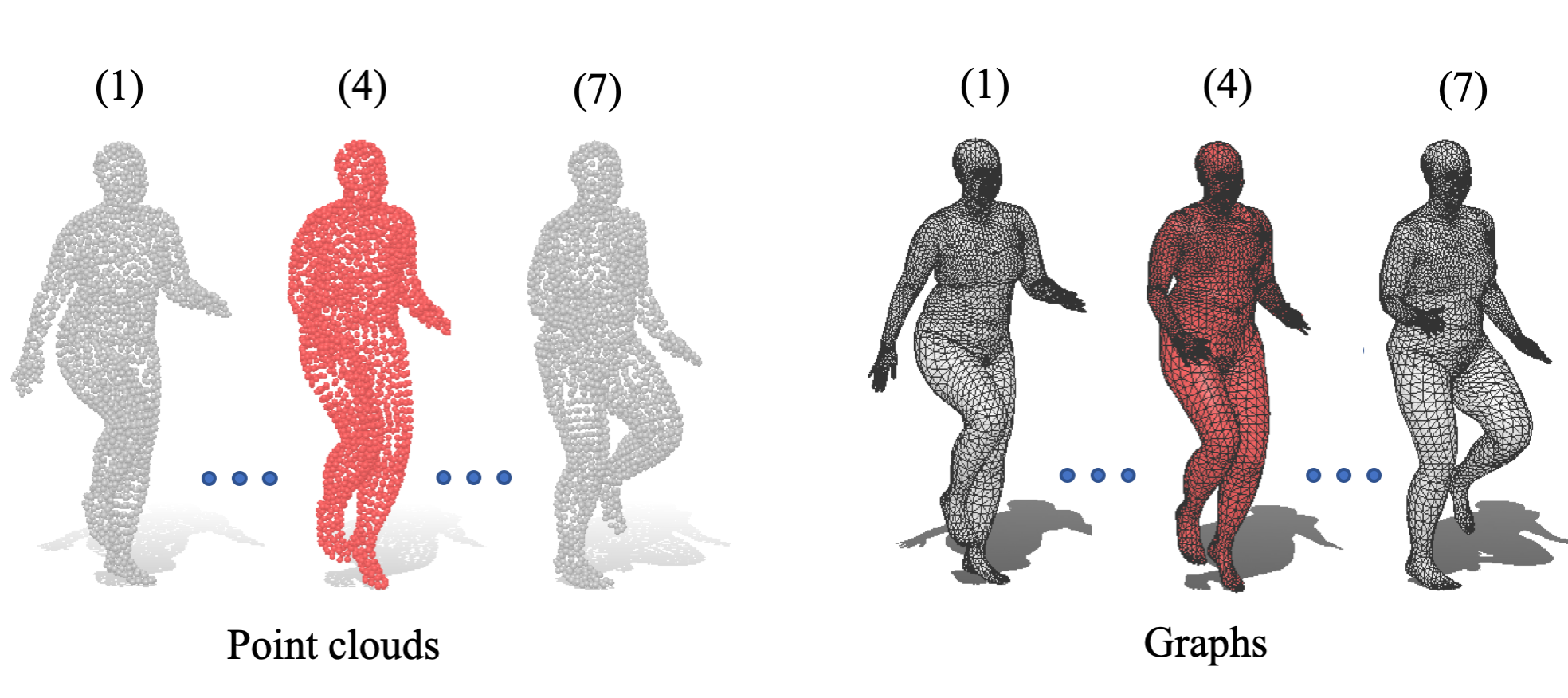}
    \caption{Shape-selection task on human shape sequences. Shapes  are represented as graphs or as point-clouds. The task is to select the central frame (red). Numbers indicate frame order.}
    \label{fig:pointclouds_and_graph}
\end{figure}

To generate the data, we cropped 7-frame-long sequences from the Dynamic Faust dataset \cite{dfaust:CVPR:2017} in which the shapes are given as triangular meshes. To generate point-clouds, we simply use the mesh vertices. To generate graphs, we use the graph defined by the triangular mesh \footnote{In \cite{dfaust:CVPR:2017} the points of each mesh are ordered consistently, providing  point-to-point correspondence across frames. When this correspondence is not available, a shape matching algorithm like \cite{litany2017deep,maron2018probably} can be used as  preprocessing.}. See Figure \ref{fig:pointclouds_and_graph} for an illustration of this task. 

Results are summarized in Table \ref{t:first_frame}, comparing DSS variants to a late-aggregation baseline (Siamese +DS) and to random choice. 
We further compared to a simple yet strong baseline. Using the mapping between points across shapes, we computed the mean of each point, and searched for the shape that was closest to that mean in $L_1$ sense. Frames in the sequence are $~80 msec$ apart, which limits the deviations around the mean, making it a strong baseline. Indeed, it achieved an accuracy of $34.47$, which outperforms both late aggregation, DSS(max) and DSS(Aitalla). In contrast, sum-based early aggregation methods reach significantly higher accuracy. 
Interestingly, using a graph representation provided a small improvement over point-clouds for almost all methods . 
\vspace{-10pt}
\paragraph{Highest quality image selection.} Given a set of $n=20$ degraded images of the same scene, the task is to select the highest-quality image. We generate data for this task from the Places dataset \cite{zhou2017places}, by adding noise and Gaussian blur to each image. The target image is defined to be the image that is the most similar in $L_1$ norm sense to the original image (see Figure \ref{fig:exp} for an illustration). Notably, DSS consistently improves over Siamese+DS  with a margin of $1\%$ to $3\%$. See Table \ref{t:quality_and _ordering}. 

\subsection{Color-channel matching}
To illustrate the limitation of late-aggregation, we designed a very simple image-to-image task that highlights why early aggregation can be critical: learning to combine color channels into full images. Here, each sample  consists of six images, generated from two randomly selected color images,
by separating each image into three color channels. In each mono-chromatic image two channels were set to zero, yielding a $d=64\times 64\times 3$ image. The task is to predict the fully colored image (\ie, imputing the missing color channels) for each of the set element. This can be formulated as a $\R^{n\times d}\too \R^{n\times d}$ $G$-equivariant task. See Figure \ref{fig:exp} for an example. 

We use a U-net architecture \cite{ronneberger2015u},  where convolutions and deconvolutions are replaced with Siamese layers or DSS variants. A DeepSets block is placed between the encoder and the decoder. Table \ref{t:color} shows that layers with early aggregation significantly outperform DS+Siamese. For context, we add the error value of a trivial predictor which imputes the zeroed color channels by replicating the input color channel, resulting in a gray-scale image. This experiment was conducted on two datasets: \textit{CelebA} \cite{liu2018large}, and \textit{Places} \cite{zhou2017places}. 

\subsection{Burst image deblurring}
Finally, we test DSS layers in a task of deblurring image bursts as in \cite{aittala2018burst}. In this task, we are given a set of $n=5$ blurred and noisy images of the same scene and aim to generate a single high quality image. This can be formulated as a $\R^{n\times d}\too \R^{d}$~ $G$-equivariant task. See results in Table \ref{t:color}, where we also added the mean absolute error of a trivial predictor that outputs the median pixel of the images in the burst at each pixel. More details can be found in the supplementary material.

\subsection{Summary of experiments} 

The above experiments demonstrate that applying early aggregation using DSS layers improves learning in various tasks and data types, compared with earlier architectures like \textit{Siamese+DS}.\revision{ This improvement might be attributed to the provably higher expressive power of DSS networks}.  More specifically, the basic DSS layer, \textit{DSS(sum)}, performs well on all tasks, and \textit{DSS(Aittala)} has also yielded strong results. \textit{DSS(Sridhar)} performs well on some tasks but fails on others. See Section \ref{s:multiview} of the supplementary materials for additional experiments on a multi-view reconstruction task.

\section{Conclusion}\label{s:conc}
In this paper, we have presented a principled approach for designing deep networks for sets of elements with symmetries: We have characterized the space of equivariant maps for such sets, analyzed its expressive power, exemplified its benefits over standard set learning approaches over a variety of tasks and data types and have shown that our approach generalizes several successful previous works.

\section*{Acknowledgments}
This research was supported by an Israel science foundation grant 737/18. We thank Srinath Sridhar and Davis Rempe for useful discussions.
\ignore{
}

\bibliography{OnLearningSetsOfSymmetricElements}
\bibliographystyle{icml2020}

\clearpage
\appendix
\onecolumn
\icmltitle{Supplementary material}

\section{Generalizations of equivariant layer characterization}\label{s:AppendixCharacterization}

\subsection{Equivariant layers for multiple features}
The following generalization to sets of elements with multiple features can be proved in a similar way to the section 3.1 in \cite{maron2018invariant}.
\begin{theorem}\label{thm:uniform_with_features}
Any linear $G-$equivariant layer $L:\R^{n\times d\times f}\too \R^{n\times d \times f'}$ is of the form 
\[L(X)_{i}=L_1(x_i) + L_2(\sum_{j\neq i}^{n}x_j),\]
where $L_i,~ i=1,2$ are linear $H$-equivariant functions. The dimension of the space of these layers is $2E(H)ff'$.
\end{theorem}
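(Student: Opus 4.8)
The plan is to mirror the dimension-counting argument used for Theorem \ref{thm:uniform}, while tracking the feature multiplicities $f$ and $f'$ throughout. First I would dispatch the easy inclusion: any $L$ of the stated form is linear, and it is $G$-equivariant because the shared map $L_1$ applied elementwise commutes with $H$ (acting on the $d$-dimension) and is identical across all rows (hence commutes with $S_n$), while $L_2$ applied to the $S_n$-invariant aggregate $\sum_{j\neq i}x_j$ is again $H$-equivariant and permutation-equivariant in the index $i$. Thus the layers of the claimed form constitute a subspace of the full space of $G$-equivariant maps $\R^{n\times d\times f}\too\R^{n\times d\times f'}$, and it remains only to match dimensions.

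Next I would compute the dimension of this subspace. Each of $L_1,L_2$ is an $H$-equivariant linear map $\R^{d\times f}\too\R^{d\times f'}$; since the feature channels carry the trivial $H$-action, such a map lives in $\mathrm{Hom}_H\!\left((\R^{d})^{\oplus f},(\R^{d})^{\oplus f'}\right)$, of dimension $E(H)ff'$ by the multiplicity identity below. Exactly as in the proof of Theorem \ref{thm:uniform}, $L_1$ contributes only to the diagonal blocks and $L_2$ only to the off-diagonal blocks of the $n\times n$ block structure of $L$, so their supports are disjoint, they are linearly independent, and the subspace therefore has dimension $2E(H)ff'$.

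The key step is a multiplicity-aware version of Lemma \ref{l:equi_dim}. For a group $G\leq S_\ell$ with permutation representation $V=\R^\ell$, the dimension of the space of $G$-equivariant maps $V^{\oplus a}\too V^{\oplus b}$ equals $\frac{1}{|G|}\sum_{g\in G}\overline{\chi_{V^{\oplus a}}(g)}\,\chi_{V^{\oplus b}}(g)$, where the character of a permutation representation is $\chi_V(g)=\trace(P(g))$ and $\chi_{V^{\oplus a}}=a\,\chi_V$. This gives $\dim\mathrm{Hom}_G\!\left(V^{\oplus a},V^{\oplus b}\right)=ab\cdot\frac{1}{|G|}\sum_{g\in G}\trace(P(g))^2=ab\,E(G)$, so multiplicities simply factor out. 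Applying this with $G=S_n\times H$, $V=\R^{n\times d}$, $a=f$, $b=f'$, and using $E(S_n\times H)=E(S_n)E(H)=2E(H)$ as established inside the proof of Theorem \ref{thm:uniform}, shows the full space of $G$-equivariant layers has dimension $2E(H)ff'$; the same identity with $G=H$, $V=\R^{d}$ confirms the $E(H)ff'$ figure used above.

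Finally I would conclude by dimension matching, exactly as before: the subspace of layers of the claimed form sits inside the full space of $G$-equivariant maps and has the identical dimension $2E(H)ff'$, forcing the two spaces to coincide. The only genuinely new ingredient relative to Theorem \ref{thm:uniform} is the character computation showing that the multiplicities factor as $ff'$; I expect this to be the main, though mild, obstacle, and it is cleanest to handle it via the inner-product-of-characters formula rather than by re-solving the fixed-point system $g\cdot L=L$ with feature channels attached.
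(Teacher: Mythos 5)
Your proof is correct, and it takes a somewhat different route from the one the paper intends. The paper gives no explicit argument for this statement: it defers to Section 3.1 of \cite{maron2018invariant}, where the idea is that since $G$ acts trivially on the channel indices, an equivariant $L:\R^{n\times d\times f}\too\R^{n\times d\times f'}$ decomposes into $ff'$ blocks $L^{k,k'}:\R^{n\times d}\too\R^{n\times d}$, one per (input-channel, output-channel) pair, each of which must itself satisfy the featureless fixed-point system $g\cdot L^{k,k'}=L^{k,k'}$; applying Theorem \ref{thm:uniform} to each slice and reassembling the blocks into $L_1,L_2$ yields both the form of the layer and the factor $ff'$ in the dimension count. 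You instead rerun the dimension-matching template of Theorem \ref{thm:uniform} directly on the space with features, absorbing the multiplicities into a character computation, $\dim\mathrm{Hom}_G\left(V\otimes\R^{f},V\otimes\R^{f'}\right)=ff'\,E(G)$; this is valid here because the averaging-projector trace argument behind Lemma \ref{l:equi_dim} works over $\R$ and $\trace(P(g^{-1}))=\trace(P(g))$ for permutation matrices, so your inner-product formula is legitimate despite the real (rather than complex) setting. The channel-decoupling route is more elementary and hands you the block structure of $L_1,L_2$ explicitly; your route is self-contained given Lemma \ref{l:equi_dim} and generalizes immediately to any situation where the group acts trivially on auxiliary indices. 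Two small points, both shared with the paper's own Theorem \ref{thm:uniform}: the disjoint-support count $2E(H)ff'$ (and $E(S_n)=2$) requires $n\geq 2$, the $L_2$ term and one copy of $E(H)ff'$ being vacuous when $n=1$; and the injectivity of $(L_1,L_2)\mapsto L$, which your linear-independence claim relies on, is exactly the observation that the diagonal blocks determine $L_1$ and the off-diagonal blocks determine $L_2$ -- worth stating, but not a gap.
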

\subsection{General equivariant and invariant layers} \label{s:equive_layers_diff_ten_order}
In the main text we characterized all $G$-invariant functions of the form $L:\R^{n\times d}\too \R^{n\times d}$. Here, we characterize all other possibilities of equivariant and invariant functions. The proof is identical to the proof of Theorem \ref{thm:uniform} in the main paper.

\begin{theorem}
\begin{enumerate}
    \item Any linear $G-$equivariant layer $L:\R^{n\times d}\too \R^{n}$ is of the form 
$L(X)_{i}=L_1^H(x_i) + L_2^H(\sum_{j\neq i}^{n}x_j)$, where $L_i^H,~ i=1,2$ are linear $H$-invariant functions.
\item Any linear $G-$equivariant layer $L:\R^{n\times d}\too \R^{d}$ is of the form 
$L(X)=L^H(\sum_{j=1}^{n}x_j)$, where $L^H$ is linear $H$-equivariant function.
\item Any linear $G$-invariant layer $L:\R^{n\times d}\too \R$ is of the form 
$L(X)=L^H(\sum_{j=1}^{n}x_j)$, where $L^H$ is linear $H$-invariant function.

\end{enumerate}
\end{theorem}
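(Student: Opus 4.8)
The plan is to replay the dimension-counting argument used for Theorem \ref{thm:uniform} essentially verbatim, the only genuinely new ingredient being a mild generalization of Lemma \ref{l:equi_dim} to maps between two \emph{different} representations. Concretely, if $G$ acts on the domain through a permutation representation $P_{\mathrm{in}}$ and on the codomain through a permutation representation $P_{\mathrm{out}}$, then the dimension of the space of $G$-equivariant linear maps equals $\frac{1}{|G|}\sum_{g\in G}\trace(P_{\mathrm{in}}(g))\trace(P_{\mathrm{out}}(g))$. This is the standard fact that $\dim\mathrm{Hom}_G(V,W)$ is the inner product of the two characters, and Lemma \ref{l:equi_dim} is exactly the case $V=W$; the formula is valid over $\R$ here because permutation representations are rational. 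In all three parts the domain representation is $P_{\mathrm{in}}(q,h)=P(q)\otimes P(h)$ for $(q,h)\in S_n\times H$, so the parts differ only through $P_{\mathrm{out}}$ and through which family of maps we propose. For each part I would first check (routinely) that the displayed maps are linear and $G$-equivariant (resp. invariant), so they form a subspace of the full solution space, and then show that the two dimensions coincide, forcing equality.

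For part (1), the codomain $\R^{n}$ carries $P_{\mathrm{out}}(q,h)=P(q)$, since $S_n$ permutes the output coordinates while $H$ acts trivially (there is no $d$-index in the output). Using multiplicativity of the trace over the Kronecker product, the dimension formula factorizes as $\bigl(\frac{1}{n!}\sum_{q}\trace(P(q))^2\bigr)\bigl(\frac{1}{|H|}\sum_{h}\trace(P(h))\bigr)=E(S_n)\cdot c_H$, where I set $c_H:=\frac{1}{|H|}\sum_{h}\trace(P(h))$. By Burnside's lemma $c_H$ is the number of $H$-orbits on $\set{1,\dots,d}$, equivalently the dimension of the space of linear $H$-invariant functionals $\R^{d}\too\R$. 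Since $E(S_n)=2$ this gives $2c_H$, which matches the dimension of the proposed subspace $\set{L \mid L(X)_i=L_1^H(x_i)+L_2^H(\sum_{j\neq i}x_j)}$: it is spanned by two independent $H$-invariant functionals whose matrix supports are disjoint, exactly as in the main theorem.

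For parts (2) and (3) the only change is the codomain trace. In part (2) the codomain $\R^{d}$ carries $P_{\mathrm{out}}(q,h)=P(h)$ (trivial $S_n$-action, since reordering the inputs must not change the merged output), so the formula factorizes as $\bigl(\frac{1}{n!}\sum_{q}\trace(P(q))\bigr)\bigl(\frac{1}{|H|}\sum_{h}\trace(P(h))^2\bigr)=1\cdot E(H)$; here $\frac{1}{n!}\sum_{q}\trace(P(q))=1$ because $S_n$ acts transitively and hence has a single orbit. This matches the dimension $E(H)$ of the proposed family $L(X)=L^H(\sum_{j}x_j)$ with $L^H$ an $H$-equivariant map. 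In part (3) the codomain $\R$ is the trivial representation, so the formula gives $\bigl(\frac{1}{n!}\sum_{q}\trace(P(q))\bigr)c_H=c_H$, matching the dimension $c_H$ of the invariant family $L(X)=L^H(\sum_{j}x_j)$ with $L^H$ an $H$-invariant functional.

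The step requiring the most care, and the only real obstacle, is correctly identifying $P_{\mathrm{out}}$ in each case, namely recognizing that $H$ acts trivially on $\R^{n}$ and $S_n$ acts trivially on $\R^{d}$, and then confirming that the scalar $c_H$ emerging from the trace computation is precisely the dimension of the space of $H$-invariant functionals so that the subspace and ambient dimensions agree. Both points are immediate once the $G$-action on each codomain is written out explicitly, after which the equality-of-dimensions conclusion is identical to that of Theorem \ref{thm:uniform}.
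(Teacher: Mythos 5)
Your proposal is correct and takes essentially the same route as the paper, whose entire proof is the remark that the argument is identical to that of Theorem \ref{thm:uniform}: verify the displayed maps form an equivariant subspace, then match its dimension against the full solution space via the averaged-trace formula. The one ingredient the paper leaves implicit---generalizing Lemma \ref{l:equi_dim} to $\dim\mathrm{Hom}_G(V,W)=\frac{1}{|G|}\sum_{g\in G}\trace(P_{\mathrm{in}}(g))\trace(P_{\mathrm{out}}(g))$ and identifying the correct (partly trivial) output representations, with Burnside giving $c_H$ as the number of $H$-orbits and hence the dimension of the space of $H$-invariant functionals---you supply correctly, and the resulting counts $2c_H$, $E(H)$, and $c_H$ all match the proposed families as required.
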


\section{Products of arbitrary permutation groups}\label{s:group_prod}
Here, we show that Theorem \ref{thm:uniform} can be generalized to products of arbitrary permutation groups. Our first step is noting that the second part of the proof of Theorem \ref{thm:uniform} can be easily modified to show that $E(H_1\times H_2)=E(H_1)\cdot E(H_2)$ for any permutation groups $H_1,H_2$. 

Indeed, Theorem \ref{thm:uniform} is a special case of the following theorem, which characterizes the space of linear equivariant maps for arbitrary products of permutation groups.
\begin{theorem}\label{thm:arbitrary_product}
Let $H_1\leq S_n$, $H_2\leq S_d$ and $\{L_i^j\}_{i=1}^{E(H_j)},~j=1,2$ are bases for the spaces of linear $H_j$-equivariant maps. Let $G=H_1\times H_2$ act on $\R^{n\times d}$ by multiplication, $(h_1,h_2)\cdot X:=h_1Xh_2^T$. Then, a basis for the space of linear $G-$equivariant layers $L:\R^{n\times d}\too  \R^{n\times d}$ is given by  
$$ T_{i_1,i_2} = L_{i_1}^1 \otimes L_{i_2}^2,\quad  i_1=1,\dots ,E(H1),~i_2=1,\dots,E(H_2)$$

\end{theorem}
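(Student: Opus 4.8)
The plan is to follow the structure of the proof of Theorem~\ref{thm:uniform} verbatim: produce $E(H_1)E(H_2)$ explicit $G$-equivariant operators, show they are linearly independent, compute the dimension of the full space of $G$-equivariant maps, and observe that it equals $E(H_1)E(H_2)$, so that the span of the exhibited operators must be everything.

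First I would check that each $T_{i_1,i_2}=L_{i_1}^1\otimes L_{i_2}^2$ is $G$-equivariant. Identifying $\R^{n\times d}$ with $\R^n\otimes\R^d$ by vectorization, the action $(h_1,h_2)\cdot X = h_1 X h_2^T$ becomes the linear operator $P(h_1)\otimes P(h_2)$. Using the mixed-product identity $(A\otimes B)(C\otimes D)=(AC)\otimes(BD)$, commutation of $T_{i_1,i_2}$ with $P(h_1)\otimes P(h_2)$ reduces to commutation of $L_{i_1}^1$ with $P(h_1)$ and of $L_{i_2}^2$ with $P(h_2)$, which holds because the $L^1$'s are $H_1$-equivariant and the $L^2$'s are $H_2$-equivariant. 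Linear independence of the family $\{T_{i_1,i_2}\}$ then follows from the standard fact that tensor products of basis vectors of two spaces form a basis of the tensor-product space; concretely, a vanishing combination $\sum c_{i_1 i_2}\, L_{i_1}^1\otimes L_{i_2}^2 = 0$ forces all $c_{i_1 i_2}=0$ since the $\{L_{i_1}^1\}$ and $\{L_{i_2}^2\}$ are themselves independent.

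The dimension count is the only place requiring the character-theoretic input, and it proceeds exactly as in Theorem~\ref{thm:uniform}. Applying Lemma~\ref{l:equi_dim} to $G=H_1\times H_2$, using $|G|=|H_1|\,|H_2|$ and $P((h_1,h_2))=P(h_1)\otimes P(h_2)$ together with the multiplicativity of the trace under the Kronecker product, I would factor the double sum to obtain $E(G)=\left(\tfrac{1}{|H_1|}\sum_{h_1\in H_1}\trace(P(h_1))^2\right)\left(\tfrac{1}{|H_2|}\sum_{h_2\in H_2}\trace(P(h_2))^2\right)=E(H_1)E(H_2)$. Since I have exhibited $E(H_1)E(H_2)$ linearly independent $G$-equivariant operators inside a space of dimension $E(H_1)E(H_2)$, they span it and hence form a basis, completing the proof. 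The only real subtlety to get right is the bookkeeping of the vectorization convention, so that the matrix action $h_1 X h_2^T$ matches the operator $P(h_1)\otimes P(h_2)$ (and that the same convention is used when writing $T_{i_1,i_2}$ as a tensor product); there is no genuine obstacle beyond this, as the argument is structurally the $E(S_n)=2$ computation of Theorem~\ref{thm:uniform} with $S_n$ and $H$ replaced by the two arbitrary factors.
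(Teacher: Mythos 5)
Your proposal is correct and follows essentially the same route as the paper: the paper likewise verifies $G$-equivariance and linear independence of the $T_{i_1,i_2}$ as tensor products of independent sets, and obtains $E(H_1\times H_2)=E(H_1)\cdot E(H_2)$ by the same Lemma~\ref{l:equi_dim} trace computation adapted from the second part of the proof of Theorem~\ref{thm:uniform}. Your write-up is merely more explicit about the vectorization convention and the mixed-product identity, details the paper leaves implicit.
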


\begin{proof}
 $\set{T_{i_1,i_2}}$ is $G$-equivariant and linearly independent as a tensor product of linearly independent sets. Moreover, its size is exactly $E(H_1)\cdot E(H_2)$ so it must span the whole space of $G$-equivariant layers.
\end{proof}

The basis mentioned in Theorem \ref{thm:arbitrary_product} can be implemented using the Kronecker product identity: 

\begin{equation}\label{e:kron}
     T_{i_1,i_2}(X) = L_{i_1}^1 X L_{i_2}^{2^T}
\end{equation}
In other words, these operators can be implemented by applying $L_{i_1}^1$ to the columns of $X$ and $L_{i_2}^2$ to the rows of $X$.

To verify that Theorem \ref{thm:arbitrary_product} is indeed a generalization of Theorem \ref{thm:uniform}, consider $H_1=S_n$. A basis for $S_n$-invariant layers is given by $L^1_1 = I_n,~L^1_2 = \ones_n \ones_n^T $. From Theorem \ref{thm:arbitrary_product} it follows that a basis for the space of $G$-invariant linear layers is given by $I_n\otimes L^2_i$ and $\ones_n\ones_n^T \otimes L^2_i$ which, by Equation \ref{e:kron}, gives the basis from Theorem \ref{thm:uniform}.

\section{Equivariant layers for order dependent action}\label{s:semidirect} 

\begin{figure}[t]
    \centering
    \begin{tabular}{cc}
        \includegraphics[width=0.23\linewidth,height=0.235\linewidth]{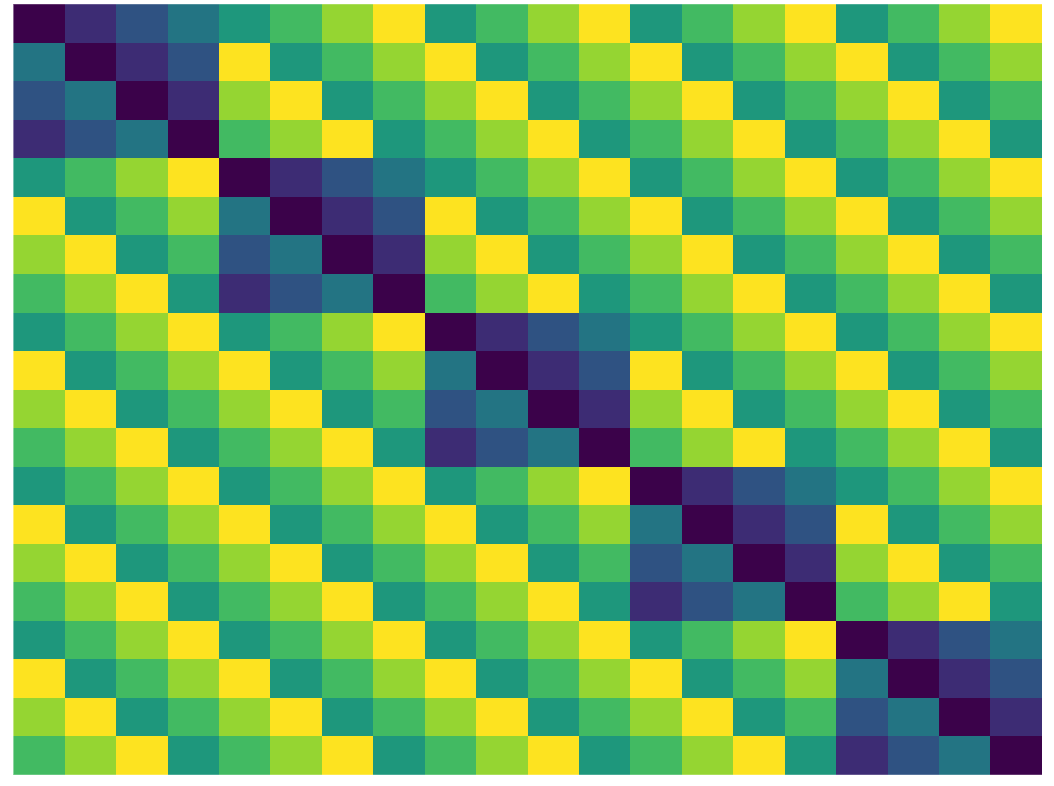} & \includegraphics[width=0.23\linewidth,height=0.235\linewidth]{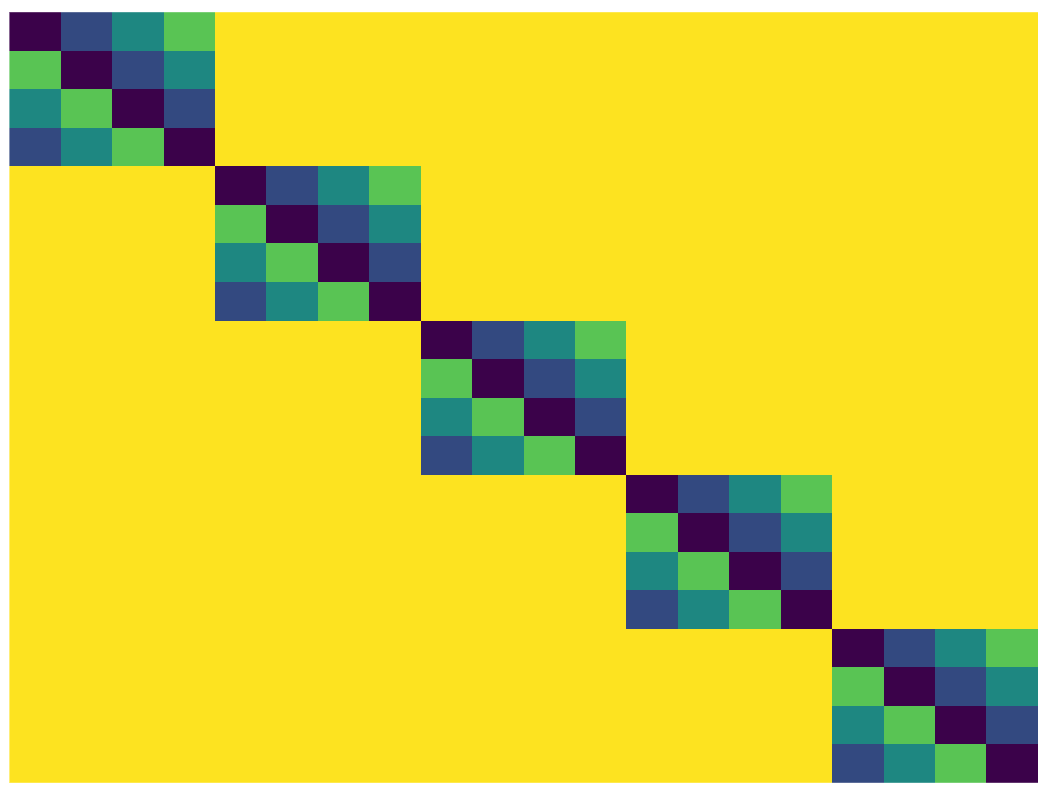}\\
        (a) & (b) 
    \end{tabular}
    \caption{parameter sharing schemes for (a) $G=S_n\times H$ and (b) $G=\oplus_{i=1}^n H\rtimes S_n$, where $d=4,n=5$ and $H=C_4$ the cyclic group of four elements. Each color represents a parameter.}
    \label{fig:param_share}
\end{figure}
As mentioned in the main text, we can consider a different learning setup, where tasks are equivariant to applying different elements of $H$ to different elements in the set. In this section, we formulate this setup and prove that when $H$ acts transitively on $\set{1,\dots,d}$, for example, in the case of images and sets, the corresponding equivariant layers are Siamese $H$-equivariant layers with an additional global summation term. In this setup, $G=\set{(h_1,\dots,h_n,\sigma)}$ is the semi-direct product $\oplus_{i=1}^n H\rtimes S_n$ (also called restricted wreath product) and the action of $G$ on $\R^{n\times d}$ is defined as  $\left((h_1,\dots,h_n,\sigma)\cdot X\right)_{ij}=X_{\sigma^{-1}(i),h_i^{-1}(j)}$.

We can now characterize the set of $G$-equivariant layers for this setup.
\begin{theorem}
If $H$ acts transitively on $\set{1,\dots,d}$ then any linear $G$-equivariant layer $L$ is of the form:
\[L(X)_{i}=L_1(x_i) + \beta \left(\sum_{j=1}^n\sum_{k=1}^d x_{jk}\right).\]
 $L_1$ is an $H$-equivariant layer and $\beta\in \R$. The dimension of the space of linear $G$-equivariant maps is $E(H)+1$.
\end{theorem}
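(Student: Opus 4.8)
The plan is to follow the dimension-counting strategy used in the proof of Theorem~\ref{thm:uniform}: first exhibit the claimed family of maps and show it forms an $(E(H)+1)$-dimensional subspace of the $G$-equivariant layers, and then use Lemma~\ref{l:equi_dim} to compute the dimension $E(G)$ of the \emph{full} space and verify that it also equals $E(H)+1$. Matching dimensions then forces the two spaces to coincide.

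\textbf{The subspace and its dimension.} First I would check directly that every map $L(X)_i = L_1(x_i) + \beta\,\mathbf{1}_d\big(\sum_{j,k} x_{jk}\big)$, with $L_1$ an $H$-equivariant layer and $\beta\in\R$, is $G$-equivariant. The Siamese term transforms correctly because $L_1$ is $H$-equivariant and row $i$ of $(h_1,\dots,h_n,\sigma)\cdot X$ equals $h_i\cdot x_{\sigma^{-1}(i)}$; the global term is unchanged because the total sum $\sum_{j,k}x_{jk}$ is invariant under every coordinate permutation $h_j$ and under $\sigma$, and because $h_i\cdot\mathbf{1}_d=\mathbf{1}_d$. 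Transitivity already enters here: the global correction must be an $H$-invariant output vector, and the space of $H$-invariant vectors in $\R^d$ is one-dimensional (spanned by $\mathbf{1}_d$) precisely because $H$ is transitive. The $E(H)$ Siamese degrees of freedom and the single parameter $\beta$ are linearly independent, since a nonzero global term makes the output at row $i$ depend on the other rows, which no Siamese map can do; hence the subspace has dimension $E(H)+1$.

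\textbf{Computing $E(G)$.} By Lemma~\ref{l:equi_dim} I need $\frac{1}{|G|}\sum_{g}\trace(P(g))^2$, where $|G|=n!\,|H|^n$ and $g=(h_1,\dots,h_n,\sigma)$. A direct count of fixed coordinates gives the key identity $\trace(P(g))=\sum_{i\in\mathrm{fix}(\sigma)}\trace(P(h_i))$, where $\mathrm{fix}(\sigma)$ is the fixed-point set of $\sigma$ and $\trace(P(h_i))$ counts the fixed points of $h_i$ on $\{1,\dots,d\}$. Squaring and summing over all $g$, I would split into the diagonal terms $i=i'$ and the off-diagonal terms $i\neq i'$. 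For the diagonal terms, summing one copy of $h_i$ gives $\sum_{h\in H}\trace(P(h))^2=|H|E(H)$; for the off-diagonal terms, the two independent factors each produce $\sum_{h\in H}\trace(P(h))$, which by Burnside equals $|H|$ times the number of $H$-orbits on $\{1,\dots,d\}$, and this is exactly $|H|$ \emph{because $H$ is transitive}. Collecting terms, the inner sum over $H^n$ contributes $|H|^n\big(|\mathrm{fix}(\sigma)|E(H)+|\mathrm{fix}(\sigma)|(|\mathrm{fix}(\sigma)|-1)\big)$ for each $\sigma$. Averaging over $S_n$ and using $\frac{1}{n!}\sum_\sigma|\mathrm{fix}(\sigma)|=1$ (one orbit of $S_n$) together with $\frac{1}{n!}\sum_\sigma|\mathrm{fix}(\sigma)|^2=E(S_n)=2$ yields $E(G)=E(H)+2-1=E(H)+1$.

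\textbf{Conclusion and main obstacle.} Since the explicit subspace is contained in the space of $G$-equivariant layers and both have dimension $E(H)+1$, they are equal, which proves the theorem. The crux is the off-diagonal cross term in the trace computation: this is the only place where the number of $H$-orbits enters, and transitivity is exactly what collapses $\sum_{h}\trace(P(h))$ to $|H|$ and thereby keeps the extra global degrees of freedom down to the single parameter $\beta$. Without transitivity the same computation would instead produce one global term per $H$-orbit, so isolating and justifying this step is the main point requiring care.
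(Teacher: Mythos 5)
Your proof is correct, but it takes a genuinely different route from the paper's. The paper proves this theorem via parameter-sharing: starting from the fixed-point equation $g\cdot L = L$, it uses the fact that entries $L_{st}=L_{kl}$ exactly when $(s,t)$ and $(k,l)$ lie in the same orbit of $G$ acting on index pairs, and then enumerates those orbits directly --- transitivity of $H$, together with the fact that distinct rows carry \emph{independent} copies of $H$, collapses all entries outside the $d\times d$ block diagonal into a single orbit (one parameter $\beta$), while the diagonal blocks are forced to be one common $H$-equivariant matrix. You instead transplant the dimension-counting strategy the paper uses for Theorem~\ref{thm:uniform}: exhibit the candidate $(E(H)+1)$-dimensional family, then show $E(G)=E(H)+1$ via Lemma~\ref{l:equi_dim}. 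Your trace computation is correct, and it is in fact more delicate than in Theorem~\ref{thm:uniform}, since for the wreath product $\trace(P(g))$ does not factor as a Kronecker product; your fixed-point identity $\trace(P(g))=\sum_{i\in\mathrm{fix}(\sigma)}\trace(P(h_i))$, the diagonal/off-diagonal split, and the Burnside evaluations ($\sum_{h\in H}\trace(P(h))=|H|$ by transitivity, $\frac{1}{n!}\sum_\sigma \abs{\mathrm{fix}(\sigma)}=1$, $\frac{1}{n!}\sum_\sigma \abs{\mathrm{fix}(\sigma)}^2=E(S_n)=2$) are all right, and you correctly isolate the cross term as the only place transitivity enters. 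The trade-off: the paper's orbit analysis is constructive --- it \emph{derives} the form of the layer rather than verifying it, and it generalizes transparently to non-transitive $H$ (one off-diagonal parameter per ordered pair of $H$-orbits, exactly as you predict at the end); your method requires guessing the answer before matching dimensions, but in exchange avoids the index-pair orbit bookkeeping, reduces everything to a clean character-style computation, and unifies this theorem with Theorem~\ref{thm:uniform} under a single technique. One shared caveat worth stating explicitly: both your linear-independence claim and the identity $E(S_n)=2$ require $n\geq 2$ (for $n=1$ the global term is itself a Siamese map, since $\mathbf{1}\mathbf{1}^T$ is $H$-equivariant), an assumption the paper also makes implicitly.
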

\begin{proof}
We want to characterize the space of $G$-equivariant maps. According to \eqref{e:fixed_point}, we need to find the null space of the following fixed point equation $g\cdot L=L,~g\in G$. As shown in \cite{wood1996representation, Ravanbakhsh2017}, this is equivalent to revealing the parameter-sharing scheme that is induced by $G$, which we will define next. Let $L\in\R^{nd \times nd}$ represent a linear $G$-equivariant map, where we think of the input $X\in \R^{n\times d}$ as a row-stack $x\in \R^{nd}$. The works mentioned above assert that $L_{st}=L_{kl}$  if and only if there exists an element $g\in G$ such that $g(s)=l,g(t)=k$. Namely, the indices $(s,t)$ and $(k,l)$ share a parameter if and only if they belong to the same orbit of $G$ when acting on $\set{1,\dots,nd}^2$.

We  now find this parameter-sharing scheme for $G$. For readability, we use two indices $(i,j)$ to represent an index in $s\in \set{1,\dots,nd}$. Given two such indices $(s,t)=(i_s,j_s,i_t,j_t)$ we wish to find their orbit under the action of $G$. We split this question into two cases and treat them one by one: (1) We first consider the case where $i_s\neq j_s$. In this case, the orbit of $(i_s,j_s,i_t,j_t)$ consists of all indices $(l,k)=(i_l,j_l,i_k,j_k)$ such that $i_l \neq i_k$ which, in turn, implies that all the elements of $L$ that are not on the $d\times d$ block diagonal share their parameter. (2) In the case where $i_s=i_t$, applying the group action shows that all the $d\times d$ diagonal blocks represent the same $H$-equivariant function.
\end{proof}
Figure \ref{fig:param_share} illustrates parameter sharing schemes for $G$-equivariant layers for (a) $G=S_n\times H$ and (b) $G=H^n\rtimes S_n$, where $d=4,n=5$ and $H=C_4$ the cyclic group of four elements. Here, each color represents a parameter. Note that all off-diagonal elements in (b) are represented by the same parameter in contrast to (a). The invariant universality proof of Theorem \ref{thm:invariant_universality} applies in this case as well.

\section{Proofs}\label{s:AppendixProofs}
\subsection{Proof of Lemma \ref{l:equi_dim}}
\begin{proof}
As discussed in Section \ref{s:prem}, $L$ can be realized as a $\ell \times \ell$ matrix, and the problem of finding all linear $G$-equivariant functions $L$ can be reduced to solving the following \emph{fixed-point equation}: $g\cdot L = L$. Recall that we are interested in the dimension of the space of linear $G$-equivariant layers, or equivalently, the dimension of the null space of the fixed-point equation. One way to obtain it, is by applying the trace function to the projection operator onto this null-space. See \citep{fulton2013representation}, section 2.2 for a derivation. In our case, this projection  is given by $\phi = \frac{1}{|G|}\sum_{g\in G} P(g) \otimes P(g)$ which implies:
\begin{align*}
E(G) =\trace(\phi) &=\frac{1}{|G|}\sum_{g \in G} \trace(P(g) \otimes P(g))\\
&=\frac{1}{|G|}\sum_{g\in G}\trace(P(g))^2, 
\end{align*}
where $\otimes$ is a Kronecker product, $P(g)\otimes P(g)$ is the matrix representation of the action of $G$ on $\R^{\ell\times \ell}$ and we use the fact that the trace is multiplicative with respect to the Kronecker product.
\end{proof} 
\subsection{Proof of Theorem \ref{thm:invariant_universality}}
\begin{proof}[Proof of lemma \ref{l:encoding}]
This lemma is a generalization of Proposition 1 in \cite{maron2019provably} and we follow their proof idea. By Noether's theorem (see, \eg, \cite{yarotsky2018universal,maron2019universality}), there is a finite set of invariant polynomials $\left(p_i:\R^d\too\R\right)_{i=1}^l$ that generate the ring of invariant polynomials, that is, any invariant polynomial $p(x)$ can be written as $p(x)=q(\left(p_i(x)\}_{i=1}^l\right)$ where $q:\R^l\too \R $ is some general polynomial. We define $u(x)=\left(p_i(x)\right)_{i=1}^l$. 
On one hand, assume that $y=g\cdot x$ then by the invariance of the polynomials $p_i$ we get  $u(y)=u(g\cdot x)=u(x)$. On the other hand, if $u(x)=u(y)$ assume towards contradiction that $g\cdot y\neq x$ for all $g\in G$, then the orbits $G\cdot x$, $G\cdot y$ are disjoint. As both sets are finite, there is a continuous function $f:\R^d\too \R$ such that $f|_{G\cdot x}\leq -2$ and $f|_{G\cdot y}\geq 2$. Using the Stone-Weierstrass theorem \cite{simmons1963introduction} we can get a polynomial $p$ with the property $p|_{G\cdot x}\leq -1$ and $p|_{G\cdot y}\geq 1$. Define $\overline{p} = \frac{1}{|G|}\sum_{g\in G} p(g\cdot x)$ then $\overline{p}$ is a $G$-invariant polynomial and using the discussion above we can write $\overline{p}(x)=q(\left(p_i(x)\}_{i=1}^l\right)$ for some polynomial $q$. This, in turn, implies the following contradiction: 
\[1\leq \overline{p}(y)=q(u(y))=q(u(x))=\overline{p}(x)\leq -1 \]
\end{proof}

\revision{Before we can continue, we need to define the low-dimensional set $\mathcal{E}$ from the theorem statement. There are several possible definitions for $\mathcal{E}$. Here we define it in a relatively simple way. $\mathcal{E}=\cup_{j>k=1}^d \{\sum_{i=1}^n x_{ik} =\sum_{i=1}^n x_{ij}  \}$. Using this definition, $\mathcal{E}$ is finite union of linear sub spaces of codimension 1. The definition makes sure that for every $X\in \R^{n\times d}\setminus \mathcal{E}$, the vector $\sum_{i=1}^n x_i\in \R^d$ has exactly $d$ different numbers. See discussion after the proof for an alternative definition of $\mathcal{E}$ with higher codimension. We note that it is possible to add random noise to break ties and move data points away from $\mathcal{E}$.}

\begin{proof}[Proof of Theorem \ref{thm:invariant_universality}]


As previously mentioned, our first task is \revision{to concatenate to each element the sum of all elements. Next, we encode each element $\widehat{x_i}:=[x_i,\sum_{j=1}^n x_j]$  with a unique $H$-invariant polynomial descriptor $u_H$ that exists according to lemma \ref{l:multi_channel_encoding}. We define the following map $U_H:\R^{n\times d\times 2} \too \R^{n\times d \times l_H}$ by applying $u_H$ in the following way:  
$$U_H(X)_{i,j,:}=u_H(\widehat{x_i}),\quad j=1,\dots,d$$
In other words, $U_H$ encodes each $\widehat{x_i}$ using $u_H$ and repeats this encoding $d$ times on the second dimension of the output tensor. Note that since $u_H$ is $H$-invariant then each component of $U_H$ that is applied to a specific element $x_i$ is $H$-equivariant. 
}
Let $Y\in \R^{n\times d \times l_H}$ denote the output of $U_{H}$ and $y_i=u_H(x_i)$ the unique $H$-invariant descriptors. Our second step is to map this set of unique descriptors to a unique set descriptor. By using Lemma \ref{l:encoding}, there exists a function $u_{S_n}:\R^{n\times l_H}\too\R^{l_{S_n}}$ that computes this encoding. Moreover, in the specific case of $S_n$, $u_{S_n}$ can be chosen to be in the following form: $u_{S_n}(y_1,\dots,y_n)=\sum_{i=1}^n p(y_i)$ where $p:\R^{l_H}\too \R^{l_{S_n}}$ is a multivariate polynomial (see section 4 in \cite{maron2019provably} for more details). We define: 
$$U_{S_n}(Y)= \frac{1}{d}\sum_{i=1}^n\sum_{j=1}^d p(Y_{i,j,:})$$

and note that $U_{S_n}(Y)=u_{S_n}(y_1,\dots,y_n)$ is exactly the unique $S_n$-invariant set descriptor, and that $U_{S_n}$ is a $G$-invariant function as it is composed of applying feature-wise polynomials and summation. \revision{Moreover, according to Lemma \ref{l:G_encoding} the mapping $u$: $X\mapsto U_{S_n}(Y)$ is an injective mapping from $G$ orbits on $K$ to $\R^{l_{S_n}}$ and we can think about it as a function that assigns a unique descriptor to each $G$-orbit}.

Up until now, we have mapped our set of elements $X$ to a unique set descriptor $U_{S_n}(U_H(X))$. Our next step is to map each such set descriptor to the value $f(X)$. Intuitively, we would have liked to apply the function $r=f\circ(U_{S_n}\circ U_H)^{-1}$ to the output of $U_{S_n}\circ U_H$ but unfortunately, $U_{S_n}\circ U_H$ is not injective so an inverse function is not well defined. Because $f$ and $u=U_{S_n}\circ U_H$ are invariant to the action of $G=S_n\times H$ there exists unique continuous maps $\tilde{f},\tilde{u}$ from the quotient space $\mathbb{R}^{n\times d}/G$ such that $f=\tilde{f}\circ \pi$ and $u=\tilde{u}\circ \pi$ where $\pi$ is the projection map to the quotient space. From the fact that our domain $K\subset \mathbb{R}^{n\times d}$ is compact we get that $\tilde{K}=\pi(K)$ is compact and $\tilde{u}$ is bijective between $\tilde{K}$ and its image. We can now write $f=(\tilde{f}\circ \tilde{u}^{-1})\circ \tilde{u} \circ \pi $ and define $r=\tilde{f}\circ \tilde{u}^{-1}$, which is continuous from lemma \ref{l:compo_cont}\\

In the last stage of the proof, we use the universal approximation properties of MLPs \cite{cybenko1989approximation,hornik1989multilayer} in order to approximate the three functions mentioned above, \ie, $U_{S_n},U_H,r$, using a $G$-invariant network. 

\revision{First, calculating the set descriptor $\sum_{j=1}^n x_j$ can be calculated with a single $G$-equivariant layer. Next, we turn to } $U_H$ which is defined as an element-wise application of the $H$-invariant function $u_H$. We note that $U$ applies a continuous $H$-invariant function element-wise which can be approximated by an $H$-invariant network according to our assumption. Furthermore, an element-wise application of an $H$-invariant network is a $G$-equivariant network which implies that there is a $G$-equivariant network $N_H:\R^{n\times 2d}\too \R^{n\times d\times l_H}$ that uniformly approximates it. 

Next, we would like to approximate $U_{S_n}$. From the universality of MLPs  there exists MLP an $M_1:\R^l_H\too \R^{l_{S_n}}$ and such that $M_1$ approximates $p$, which implies that  $\sum_{i=1}^n M_1(y_i)$ approximates $u_{S_n}(\{ y_i\}_{i=1}^n)$. We define the next equivariant layers to apply $M_1$ to the feature dimension of $Y$. We then apply a scaled $G$-invariant summation function in order to get $\frac{1}{d}\sum_{i=1}^n\sum_{j=1}^{d} M_1(y_i)$ as output. Our last function to approximate is $r$ and since it is a continuous function defined on a compact domain we can approximate it with an MLP $M_2$. 

To summarize, we have written our function of interest $f$ as a composition of three functions $U_H,U_{S_n},r$, and constructed a networks that uniformly approximates each one of these functions, which, by using the uniform continuity of the functions, gives us a uniform approximation of their composition.  
\end{proof}
\revision{
\begin{lemma}\label{l:multi_channel_encoding}
Let $H\leq S_d$ act on $\R^{d\times k}$ by applying the same element $h\in H$ to each channel,  then there exists a polynomial function $u:\R^{d\times k}\too \R^l$, for some $l\in \mathbb{N}$, for which $u(x)=u(y)$ if and only if $x=h\cdot y$ for some $h\in H$. 
\end{lemma}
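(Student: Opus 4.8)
The statement to prove is Lemma \ref{l:multi_channel_encoding}, which generalizes the single-channel encoding lemma (Lemma \ref{l:encoding}) to the multi-channel setting where $H \leq S_d$ acts on $\R^{d \times k}$ by applying the same permutation $h$ to each of the $k$ channels (i.e., to each column). We want a polynomial map $u$ whose fibers are exactly the $H$-orbits.

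\textbf{Plan.} My plan is to observe that the proof of Lemma \ref{l:encoding} never used any special structure of the domain beyond the facts that (i) $H$ is finite, so every orbit $H\cdot x$ is finite, and (ii) $H$ acts by permuting coordinate indices, so the action is by a finite group of linear (permutation) maps. Both facts hold verbatim when the domain is $\R^{d\times k}$ and $h$ acts by simultaneously permuting the $d$ rows of every column. Concretely, I would appeal to Noether's theorem / Hilbert's finiteness theorem to obtain a finite generating set $(p_i)_{i=1}^l$ of the ring of $H$-invariant polynomials on $\R^{d\times k}$, which exists because $H$ is a finite group acting linearly on a finite-dimensional real vector space. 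Set $u(x)=(p_1(x),\dots,p_l(x))$.

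\textbf{Key steps.} First, establish the easy direction: if $x=h\cdot y$ then $u(x)=u(y)$, which is immediate from the $H$-invariance of each generator $p_i$. Second, for the converse, suppose $u(x)=u(y)$ but the orbits $H\cdot x$ and $H\cdot y$ are disjoint. Since $H$ is finite, both orbits are finite disjoint subsets of $\R^{d\times k}$; by continuity there is a continuous function separating them ($f\le -2$ on one, $f\ge 2$ on the other), and by Stone--Weierstrass we approximate $f$ by a polynomial $p$ with $p\le -1$ on $H\cdot x$ and $p\ge 1$ on $H\cdot y$. Third, symmetrize: $\overline{p}(z)=\frac{1}{|H|}\sum_{h\in H}p(h\cdot z)$ is an $H$-invariant polynomial, so it lies in the ring generated by the $p_i$, hence $\overline{p}=q\circ u$ for some polynomial $q$. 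The invariance also preserves the separation, giving $\overline{p}|_{H\cdot x}\le -1$ and $\overline{p}|_{H\cdot y}\ge 1$. Then $-1\ge \overline{p}(x)=q(u(x))=q(u(y))=\overline{p}(y)\ge 1$, a contradiction, so some $h\in H$ satisfies $x=h\cdot y$.

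\textbf{Main obstacle.} The proof is essentially identical to that of Lemma \ref{l:encoding}; the only point requiring a moment of care is verifying that the two structural hypotheses of that proof genuinely transfer to the $\R^{d\times k}$ setting. The crucial check is that the $H$-action on $\R^{d\times k}$ is a \emph{finite group of linear transformations} (so that Noether's finite-generation theorem applies and each orbit is finite), which it is: $h$ acts as $P(h)$ applied to each column, i.e., as $P(h)\otimes I_k$ on $\R^{d\times k}\cong \R^{dk}$. I expect no genuine difficulty beyond making this identification explicit; the separation and symmetrization arguments then go through unchanged.
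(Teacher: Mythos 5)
Your proof is correct, but it takes a different route from the paper's. The paper's own proof is a one-line reduction: since $h$ acts on $\R^{d\times k}\cong\R^{dk}$ by the matrix $P(h)\otimes I_k$, which is itself a \emph{permutation} matrix on the $dk$ coordinates, the image of $H$ under this action is (an isomorphic copy of) a subgroup of $S_{d\cdot k}$, and Lemma \ref{l:encoding} then applies verbatim as a black box. You are one observation away from this: you correctly identify the action as $P(h)\otimes I_k$, but you stop at ``finite group of linear maps'' rather than noticing it is a permutation action, and consequently you re-derive the entire Noether-plus-separation argument (finite generation of the invariant ring, Stone--Weierstrass separation of the disjoint finite orbits, symmetrization, and the contradiction $-1\geq \overline{p}(x)=q(u(x))=q(u(y))=\overline{p}(y)\geq 1$) in the multi-channel setting. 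Your re-derivation is sound, and it buys something the paper's reduction does not: it shows the encoding lemma holds for \emph{any} finite group acting linearly on a finite-dimensional real vector space, not just permutation subgroups, since Noether's finiteness theorem and the orbit-separation argument need only finiteness and linearity. The paper's reduction buys brevity and avoids repeating an argument already on record, at the cost of being tied to the permutation setting. Either proof is acceptable; if you keep yours, you should state explicitly that you are invoking the finite-linear-group form of Noether's theorem rather than the statement of Lemma \ref{l:encoding} itself, since that lemma as written assumes $H\leq S_d$ acting on $\R^d$.
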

\begin{proof}
We look at an isomorphic copy of $H$ as a subgroup of $S_{d\cdot k}$. From Lemma \ref{l:encoding} there exists such a polynomial descriptor, which is clearly $H$-invariant. 
\end{proof}

\begin{lemma}\label{l:G_encoding}
Let $K\subset \R^{n\times d}$ such that $K\cap \mathcal{E}=\emptyset$ then the polynomial mapping $u$: $X\mapsto U_{S_n}(Y)$ is $G$-invariant and injective as function from $\R^{n\times d}/G$.
\end{lemma}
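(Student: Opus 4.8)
The plan is to verify the two assertions separately, treating $G$-invariance as a short direct computation and devoting the bulk of the argument to injectivity on the quotient. For invariance, I would first record how the extra channel behaves: writing $s=\sum_{i=1}^n x_i\in\R^d$, an element $(q,h)\in S_n\times H$ sends row $i$ of $X$ to $h\cdot x_{q^{-1}(i)}$, so the sum becomes $h\cdot s$. Hence the augmented elements transform by $\widehat{x_i}\mapsto h\cdot\widehat{x_{q^{-1}(i)}}$, where $h$ acts simultaneously on both channels; this is exactly the diagonal action covered by Lemma \ref{l:multi_channel_encoding}, so $u_H$ is invariant and the descriptors $y_i=u_H(\widehat{x_i})$ are merely permuted by $q$. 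Since $U_{S_n}(Y)=\sum_i p(y_i)$ is symmetric in the $y_i$, it is unchanged, giving $G$-invariance of $u$.

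For injectivity I would suppose $X,X'\in K$ with $u(X)=u(X')$ and reconstruct a group element relating them in two stages, peeling off $S_n$ first and $H$ second. Because $u_{S_n}$ is a faithful multiset encoder (Lemma \ref{l:encoding} applied to $S_n$ acting on the rows of $\R^{n\times l_H}$), the equality $u_{S_n}(\{y_i\})=u_{S_n}(\{y_i'\})$ forces the multisets $\{\{y_i\}\}$ and $\{\{y_i'\}\}$ to coincide, so there is $q\in S_n$ with $y_{q(i)}=y_i'$ for all $i$. Applying the faithfulness of $u_H$ from Lemma \ref{l:multi_channel_encoding} to each equality $u_H(\widehat{x_{q(i)}})=u_H(\widehat{x_i'})$ then yields, for each $i$, some $h_i\in H$ with $\widehat{x_{q(i)}}=h_i\cdot\widehat{x_i'}$. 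Reading off the two channels gives $x_{q(i)}=h_i\cdot x_i'$ and, crucially, $s=h_i\cdot s'$.

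The hard part — and the reason the excluded set $\mathcal{E}$ appears — is the synchronization step: showing the a priori element-dependent permutations $h_i$ are in fact all equal. Here I would use that $X,X'\notin\mathcal{E}$, which by the definition of $\mathcal{E}$ means both $s$ and $s'$ have $d$ pairwise-distinct coordinates. The relation $s=h_i\cdot s'$ then pins down $h_i$ uniquely: if $h_i\cdot s'=h_{i'}\cdot s'$ then $(h_{i'}^{-1}h_i)\cdot s'=s'$, and distinctness of the entries of $s'$ forces $h_{i'}^{-1}h_i$ to fix every coordinate, i.e. $h_i=h_{i'}$. Setting $h$ to this common value, the first-channel equalities become $x_{q(i)}=h\cdot x_i'$ for all $i$, which is precisely the statement that $X'=(q^{-1},h^{-1})\cdot X$ (up to checking the index conventions of the $G$-action). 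Thus $X$ and $X'$ lie in the same $G$-orbit, and $u$ descends to an injection on $(\R^{n\times d}\setminus\mathcal{E})/G$, in particular on $K/G$. I expect the only delicate bookkeeping to be the consistent inverse conventions for the $S_n$ and $H$ actions; the conceptual content is entirely in the distinct-coordinates argument that globalizes the local symmetries into a single $h$.
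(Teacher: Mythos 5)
Your proof is correct and takes essentially the same route as the paper's: both recover the multiset of $H$-invariant descriptors from the faithfulness of the set encoder, recover each augmented element $[x_i,\sum_j x_j]$ up to some $h_i\in H$ from the faithfulness of $u_H$, and then use the distinctness of the coordinates of the shared sum channel (the role of $K\cap\mathcal{E}=\emptyset$) to force all the $h_i$ to coincide, yielding a single element $(q,h)\in G$ relating the two inputs. Your two-preimage phrasing of injectivity and the explicit invariance computation are somewhat more careful than the paper's reconstruction-style narrative, but the underlying argument is identical.
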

\begin{proof}
 The map is clearly invariant as a composition of invariant and equivariant functions, so we need show that this encoding is unique modulo $G$ for $X\in \R^{n\times d}$. To see this we first note that since the set encoding in unique up to permutation we can reconstruct $z_{\sigma(1)},...,z_\sigma(n)$ for some $\sigma\in S_n$. Then for each $z_i$ we can reconstrcut $[x_i,y]$ up to a permutation $h_i\in H$, i.e. we can recover 
 $$[h_{\sigma(1)}x_{\sigma(1)},h_{\sigma(1)}y],...,[h_{\sigma(n)}x_{\sigma(n)},h_{\sigma(n)}y]$$
 
 We know that the concatenated $y$ part of the input is identical, and contains only unique numbers from our assumption $K\cap\mathcal{E}=\emptyset$ so we can permute each one such that they all agree with the first element $h_{\sigma(1)}y$. Since all elements of $y$ are unique, the single permutation for the i'th element that does this is $h_{\sigma(1)}h_{\sigma(i)}^{-1}$ to get $[h_{\sigma(1)}x_{\sigma(1)},h_{\sigma(1)}y],[h_{\sigma(1)}x_{\sigma(2)},h_{\sigma(1)}y],...,[h_{\sigma(1)}x_{\sigma(n)},h_{\sigma(1)}y]$ which is the same as our input up to $(\sigma, h_{\sigma(1)})\in G$.\\
 
\end{proof}

}

\begin{lemma}\label{l:compo_cont}
Let $K\subset \R^m$ be a compact domain and $f:K \too \R $ be a continuous function such that $f=h\circ g$. If $g$ is continuous, then $h$ is continuous on $g(K)$.  
\end{lemma}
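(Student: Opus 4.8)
The statement to prove is Lemma~\ref{l:compo_cont}: if $K \subset \R^m$ is compact, $f : K \to \R$ is continuous, and $f = h \circ g$ with $g$ continuous, then $h$ is continuous on $g(K)$.

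The plan is to establish continuity of $h$ at an arbitrary point $y_0 \in g(K)$ via a sequential characterization, exploiting compactness to pass from convergence in the image $g(K)$ back to convergence in the domain $K$. First I would fix $y_0 = g(x_0)$ for some $x_0 \in K$ and take an arbitrary sequence $(y_j) \subset g(K)$ with $y_j \to y_0$. Choosing preimages $x_j \in K$ with $g(x_j) = y_j$, I would use compactness of $K$ to extract a convergent subsequence $x_{j_s} \to x^* \in K$. Continuity of $g$ then forces $g(x^*) = \lim_s g(x_{j_s}) = \lim_s y_{j_s} = y_0$, so $x^*$ is a valid preimage of $y_0$, and continuity of $f$ gives $f(x_{j_s}) \to f(x^*)$. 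Since $f = h \circ g$, this reads $h(y_{j_s}) = f(x_{j_s}) \to f(x^*) = h(g(x^*)) = h(y_0)$, where the last equality uses $g(x^*) = y_0$ together with the fact that $h$ is well-defined on $g(K)$ (so its value depends only on the image point, not on the chosen preimage).

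The one subtlety I would be careful about is that this argument a priori only shows that \emph{some} subsequence of $h(y_j)$ converges to $h(y_0)$, not the full sequence. To upgrade this to full convergence I would run the standard subsequence-of-subsequence argument: suppose for contradiction that $h(y_j) \not\to h(y_0)$; then there is an $\eps>0$ and a subsequence $(y_{j_t})$ with $\abs{h(y_{j_t}) - h(y_0)} \geq \eps$ for all $t$. Applying the compactness extraction above to this subsequence produces a further subsequence along which $h \to h(y_0)$, contradicting the uniform separation by $\eps$. Hence $h(y_j) \to h(y_0)$, and since $y_0$ and the sequence were arbitrary, $h$ is continuous on $g(K)$.

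The main obstacle — really the only nontrivial point — is that $h$ need not be defined, let alone continuous, outside $g(K)$, and that $g$ may be far from injective, so one cannot simply write $h = f \circ g^{-1}$. Compactness of $K$ is exactly what rescues the argument: it guarantees that any sequence of preimages has a convergent subsequence landing back in $K$, which is what lets continuity of $f$ be transported to continuity of $h$. I would note that well-definedness of $h$ on $g(K)$ (i.e.\ that $f$ is constant on each fiber $g^{-1}(y)$) is implicit in the hypothesis $f = h \circ g$, and it is this fiber-constancy that makes the value $f(x^*)$ independent of which preimage the subsequence converges to.
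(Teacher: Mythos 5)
Your proof is correct and takes essentially the same route as the paper's: a sequential argument that uses compactness of $K$ to extract convergent preimage subsequences, continuity of $g$ to identify the limit $x^*$ as a preimage of $y_0$, and continuity of $f$ to transfer convergence to $h$. If anything, your explicit subsequence-of-subsequence step with the $\eps$-separated subsequence is more careful than the paper's version, which passes to a convergent subsequence of the $x_i$ ``without loss of generality'' --- a step that, taken literally, could lose the assumed non-convergence $h(y_i)\nrightarrow h(y_0)$, and is repaired exactly by the uniform separation you introduce.
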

\begin{proof}
Assume that this is incorrect, then there is a sequence $y_i=g(x_i)$ such that $y_i \rightarrow y_0 $ but $h(y_i)\nrightarrow{} h(y_0)$. Without loss of generality, assume that $x_i\rightarrow x_0\in K$ (otherwise choose a converging sub sequence). We have 
$$f(x_i)=h(g(x_i))=h(y_i)\nrightarrow{}h(y_0)=h(g(x_0))=f(x_0)$$ 
which is a contradiction to the continuity of $f$.
\end{proof}

\revision{
\paragraph{Alternative choices of $\mathcal{E}$.} We can make  $\mathcal{E}$ smaller by using power-sum set descriptors in addition to the sum we used in the proof above, namely $\widehat{x_i}:=[x_i,\sum_{i=1}^n x_i,\sum_{i=1}^n x^2_i,\dots,\sum_{i=1}^n x^n_i]$ where for $x\in \R^d$, $x^k$ is the element-wise power operation. In this case, it is enough for us to have a single $l\in \{1,\dots,n\}$ for which $\sum_{i=1}^n x^k_i$ has $d$ different numbers. Here, we can define $\mathcal{E}=\cap_{l=1}^n\cup_{j>k=1}^d \{\sum_{i=1}^n x^l_{ik} =\sum_{i=1}^n x^l_{ij}  \}$. This is an intersection of $n$ sets of codimension 1.
}

\subsection{Proof of Theorem \ref{thm:equivariant_universality}}
\revision{For the equivariance proof, we need to use Theorem \ref{thm:invariant_universality} on subsets of size $n-1$ of the inputs. For that reason, we define $\mathcal{E}$ to make sure that all sums of such subsets have different numbers. We define: $\mathcal{E}=\cup_{l=1}^n\cup_{j>k=1}^d \{\sum_{i\neq l}^n x_{ik} =\sum_{i\neq l}^n x_{ij}  \}$ . As in the proof of Theorem \ref{thm:invariant_universality}, we can define $\mathcal{E}$ using the power-sum polynomials and get a smaller set.}
\begin{proof}[Proof of Theorem \ref{thm:equivariant_universality}]
We first note that $G$-equivariant polynomials are dense in the space of continuous $G$-equivariant functions over a compact domain. For proof, see Lemma 4 in \cite{segol2019universal}: while the statement in the paper is about $S_n$ equivariant polynomial, the proof trivially extends for every finite group.
We therefore start by approximating  $P:\mathbb{R}^{n\times d}\rightarrow \mathbb{R}^{n\times d}$, an equivariant polynomial map of degree at most $m$. We look at $P_1=\mathbb{R}^{n\times d}\rightarrow \mathbb{R}^{d}$ the first element in the output of $P$. If $P$ is $S_n\times H $ equivariant then by lemma \ref{lemma:s_n-1}  $P_1$ is $S_{n-1}$ invariant when $S_{n-1}$ operates on the last $n-1$ rows and $H$ equivariant. If we fix $x_2,...,x_n$ then $P_1(x_1,...,x_n)$ is a H-equivariant polynomial in $x_1$. The space of H-equivariant polynomials of bounded degree is a finite dimensional linear space and therefore has a basis $q_1,...,q_T$.
We can therefore write $P_1(x_1,...,x_n)=\sum \alpha_k(x_2,...,x_n)\cdot q_k(x_1) $ where $\alpha_k:\mathbb{R}^{n-1\times d}\rightarrow \mathbb{R}$ are the coefficients. Because $P_1$ is $S_{n-1}$-invariant, $H$-equivariant and $q_k$ are a basis it is easy to see that $\alpha_k$ must be $S_{n-1}\times H$ -invariant: If $\sigma$ is a permutation on the last $n-1$ elements then $P_1(x_1,...,x_n)=P_1(x_1,x_{\sigma(2)}...,x_{\sigma(n)})$ since $P_1$ is invariant to $\sigma$. We then have $\sum \alpha_k(x_2,...,x_n)\cdot q_k(x_1)=\sum \alpha_k(x_{\sigma(2)},...,x_{\sigma(n)})\cdot q_k(x_1) $ and because $q_k$ form a basis this means that for each $k$, $\alpha_k(x_2,...,x_n)$ is equal to $\alpha_k(x_{\sigma(2)},...,x_{\sigma(n)})$ proving $S_{n-1}$ invariance. The same idea shows $H$-invariance.

Next, we note that since $P$ is $G$-equivariant we have $[P(x_1,..,x_n)]_i=\sum_k  \alpha_k(x_1,...x_{i-1},x_{i+1},x_n)\cdot q_k(x_i)$  by applying a permutation that only switches 1 and $i$. We will now show how this can be approximated using our $G$-equivariant network. This is the key for the proof as we break down the equivariant function to invariant functions that we can already approximate (up to the fact that they are $S_{n-1}$-invariant and not $S_n$ invariant), and $H$-equivariant functions that can be approximated by the assumption on $H$. The fact that $\alpha_k$ are invariant to permutations of the other $n-1$ elements and not the whole set is not an issue, as that can be implemented easily in our framework as our basic layers separates the sum over other elements with the operation over the current one  (see theorem \ref{thm:uniform}) which is exactly the operation needed.

We will use function names from the proof of theorem \ref{thm:invariant_universality} when applicable for clarity. The first step of approximating $P$ will be $U_H:\mathbb{R}^{n\times d} \too\mathbb{R}^{n\times d\times l_H+1}$ that maps each element to a unique $H$-invariant descriptor (same as in the proof of Theorem \ref{thm:invariant_universality}) plus the original information on a separate channel. The second mapping is $U_{S_{n-1}}:\mathbb{R}^{n\times d\times l_H+1}\rightarrow \mathbb{R}^{n\times d\times l_{S_{n-1}}+1}$ that computes an $S_{n-1}\times H$-invariant representation of the other $n-1$ inputs at each point plus the original input. Next, we need to compute the equivariant polynomial base elements and invariant coefficients. The coefficients are a continuous mapping $r:\mathbb{R}^{l_{S_{n-1}}}\rightarrow \mathbb{R}^{T}$ (proof of continuity is the same as in the proof of Theorem \ref{thm:invariant_universality}) and can be approximated by an MLP, the equivariant polynomials $q_k:\mathbb{R}^d\rightarrow \mathbb{R}^d$ are $H$-equivariant and continuous and can be approximated by an $H$-equivariant network that is applied to each element independently. The last operation is the multiplication and summation over basis elements, which can be approximated by an MLP on the channel dimension.
This breaks down the computation of $P$ into parts that each can be approximated by an equivariant neural network and therefore so can $P$.  Since all polynomials are dense in the space of equivariant functions this shows that each equivariant function can be approximated by an equivariant neural network.
\end{proof}
\begin{lemma}\label{lemma:s_n-1}
If $f:\mathbb{R}^{n\times d}\rightarrow \mathbb{R}^{n\times d}$ is $S_n\times H$ equivariant, then $f_1:\mathbb{R}^{n\times d}\rightarrow \mathbb{R}^{d}$ the first element of $f$ is $S_{n-1}$ invariant and H equivariant. We assume $S_{n-1}$ acts by permuting the last n-1 elements.
\end{lemma}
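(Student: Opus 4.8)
The plan is to specialize the defining equivariance identity $f((q,h)\cdot X)=(q,h)\cdot f(X)$ to two judiciously chosen subfamilies of $G=S_n\times H$ and, in each case, read off only the first row of both sides. The one computation I would set up once and reuse is the first row of the right-hand side: by the definition of the action, $[(q,h)\cdot f(X)]_1 = h\cdot [f(X)]_{q^{-1}(1)}$. Thus the entire argument reduces to tracking the single index $q^{-1}(1)$ together with the column action of $h$, and everything else is bookkeeping.

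To extract $H$-equivariance, I would take $q=e$ the identity permutation and let $h$ range over $H$. Then $q^{-1}(1)=1$, so the first-row identity becomes $f_1((e,h)\cdot X)=h\cdot f_1(X)$. Since $(e,h)\cdot X$ is precisely $X$ with $h$ applied to the columns of every row, this is exactly the claim that $f_1$ is $H$-equivariant (as a map $\R^{n\times d}\too\R^d$, with $H$ acting on the output $\R^d$).

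To extract $S_{n-1}$-invariance, I would embed $\sigma\in S_{n-1}$ into $S_n$ as the permutation that fixes the index $1$ and permutes $\{2,\dots,n\}$, and take $h=e$. The key observation is that $\sigma^{-1}(1)=1$, so the first row of the right-hand side collapses to $[f(X)]_1=f_1(X)$, yielding $f_1((\sigma,e)\cdot X)=f_1(X)$. Because $(\sigma,e)\cdot X$ is $X$ with its last $n-1$ rows permuted by $\sigma$, this is exactly invariance of $f_1$ under the prescribed $S_{n-1}$-action.

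There is no genuine obstacle here: the lemma is an immediate consequence of the product structure of the action. The only point deserving care is the index bookkeeping—in particular verifying that the chosen embedding of $S_{n-1}$ fixes $1$ so that the first output row is untouched, and keeping the row-permutation part (first factor) cleanly separated from the column-action part (second factor) throughout.
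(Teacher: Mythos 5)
Your proof is correct and is essentially the same argument as the paper's: the paper packages it as $f_1=\pi_1\circ f$ with $\pi_1$ the first-row projection, which is $S_{n-1}$-invariant and $H$-equivariant, and your direct substitution of $(e,h)$ and $(\sigma,e)$ with $\sigma^{-1}(1)=1$ simply unfolds that same observation. No gaps; the index bookkeeping you flag is exactly the one point the paper's proof also relies on.
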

\begin{proof}
The proof is simple, we can think of $f_1$ as $\pi_1\circ f$, ,\ie, $f$ followed by the projection map on the first element. Since the permutations in $S_{n-1}$ leave the first element in place, $\pi_1$ is invariant to them and so is $f_1$ as composition of equivariant and invariant. It is also clear that $\pi_1$ is H equivariant making $f_1$ H equivariant.

\end{proof}

\section{Implementation details}\label{s:implementation_details}
All experiments (unless stated otherwise) were conducted using the PyTorch framework \cite{paszke2017automatic}, trained with the Adam optimizer \cite{kingma2014adam} on NVIDIA V100 GPU. We performed hyper-parameter search for all methods to choose a learning rate in $\{10^{-1},10^{-2},\dots,{10^-7}\}$. All model architectures use batch normalization \cite{ioffe2015batch} after each linear layer.

\paragraph{Datasets.} The following datasets were used:(1) Places \cite{zhou2017places}, an image dataset with natural scenes such as beach, parking lot or soccer field;
(2) UCF101 \cite{soomro2012ucf101}, an action recognition dataset for realistic action videos; (3) Celeba \cite{liu2018large}, a large scale image dataset that contains celebrity faces; (4) Dynamic Faust \cite{dfaust:CVPR:2017}, triangular meshes of real people performing different activities; (5) ImageNet \cite{deng2009imagenet} large image classification dataset. 

\subsection{Signal classification experiment}
\paragraph{Data preparation.} We generated 30,000 training examples and 3,000 test and validation examples. The type of the signal was uniformly sampled from the three possible types (sine, rectangular and saw-tooth). Frequency and amplitude were uniformly sampled from $[1,10]$, horizontal shift was uniformly sampled from $[0,2\pi]$, vertical shift was sampled from $[-5,5]$. 
From each clean signal, we generate a set of size 25 by replicating the signal and adding independent noise to each copy. The noise is sampled from an i.i.d. Gaussian distribution with zero mean and $3a$  standard deviation where $a$ is the amplitude of the signal.

\paragraph{Network and training.} For training we used batch size of 64 and ran for 200 epochs with validation-based early stopping. Training took between 15 minutes for MLP to 5 hours for DSS(Aitalla). For all layer types, we used three layers followed by a fully connected layer with the following number of features:  MLP $(840, 420, 420)$, Siamese $(220, 220, 110)$, DSS $(160, 160, 80)$, DSS (max) $(160, 160, 80)$,Siamese+DS(2 Siamese layers + a single DS layer) $(200, 200, 100)$, DS $(1000, 1000, 500)$, DS (max) $(1000, 1000, 500)$, Aittala \cite{aittala2018burst} $(160, 160, 80)$, Sridhar \cite{sridhar2019multiview} $(220, 220, 110)$. In models that use convolution, we used strided-convolution with stride $2$. For all models, we have used sum-pooling on the set  and spatial dimensions before the fully connected layer.

\subsection{Image selection}
\paragraph{Data preparation.} The data for the video frame ordering experiment was taken from the UFC101 dataset  \cite{soomro2012ucf101}. For the highest quality image selection task we used the Places dataset \cite{zhou2017places}. For the places dataset, we first selected 25 classes that have the largest number of images. We then generated the train and validation sets from the standard train split, and used the standard validation split as test. In both cases, we used 20,000 training examples and 2,000 validation and test examples. The set sizes are $n=8$ for the frame ordering experiment and $n=20$ for the image quality assessment experiment. Train Image size was reduced to $80\times 80$ and we used random cropping to $64\times 64$ as well as random flipping as data augmentation. For the image quality task we also used random rotations. For the highest image quality task, we sampled a base blur $\sigma\sim U[0,1]$ for each image, and another example specific $\sigma'\sim U[0,1]$ and used  $\sigma+\sigma'$ as the Gaussian width for blurring; i.i.d Gaussian noise was added to the result for the Gaussian noise case and i.i.d random pixels where zeroed-out in the Occlusion noise case.

\paragraph{Network and training.} For training we used batch size of 16 for 200 epochs with validation-based early stopping. training time was 6.5 (3.75) hours for DS and 4.5 (3.25) hours for DSS for the image quality assessment task (video frame ordering task). The network architecture is based on the image anomaly detection network suggested by \cite{zaheer2017deep} and is composed of convolutional part followed by a DeepSets block. The convolutional part consists of three blocks, each of which consists of the following number of features (32,32,64),(64,64,128),(128,128,256) for \textit{DSS(sum)} and \textit{DSS(max)},(90,90,100),(100,100,100)(110,110,128) for \textit{DSS(Aittala)} and (50,50,100),(100,100,180),(200,200,256) for DS+Siamese and DSS(Sridhar). All DeepSets blocks have three layers with features(256,128,1).

\subsection{Shape selection}
\paragraph{Data preparation.}
The data for the shape selection task was taken from Dynamic Fuast \cite{dfaust:CVPR:2017}. This dataset contains 3D videos of 10 human subjects (male and female) performing 15 activities (e.g. jumping jacks, punching, etc.) The data are represented as triangular meshes of the same topology. Importantly, all the shapes are in one-to-one correspondence. For the graph modality we directly use the mesh, see sec. \ref{s:examples_and_generalizations} for more details. For the point-cloud modality we simply use the mesh vertices. We generated sets of 7 frames by randomly cropping sequences and shuffling their order. Note that, since the scans were captured at 60fps, the motion between few consecutive frames is approximately rigid and at constant velocity. To make the problem more challenging we chose to skip every $k$ frames. To choose $k$ we ran the following simple experiment. For each value of $k$ we computed the mean shape of the set by averaging the point coordinates of all the set elements. We then searched for the shape in the set that was closest to the mean shape and evaluated the accuracy on the validation set. The results were $80.95, 43.75, 32.91, 27.73$ for skip sizes of $1, 3, 5, 10$ respectively. We ended up choosing a skip size of 5. For testing we used a held out set. We chose a very challenging split where both the subjects and the activities are not seen at train time.

\paragraph{Network and training.}
We repeated all experiments with 3 different seeds, trained for 100 epochs using Adam optimizer on an NVIDIA TitanX with validation-based early stopping. For the point-cloud modality, all methods were ran using a batch-size of 16. Training times were roughly 2 hours. The network architecture is based on a PointNet module \cite{qi2017pointnet} with 1D convolutions of dimensions (64, 256, 128) followed by a DeepSets block of dimensiones (128, 128, 128, 1). For the graph experiment we used batch-sizes of 8 for the architecture of Aittala, and 12 for all the other architectures. Training took about 20 hours. The architecture is based on a pytorch-geometric \cite{fey2019fast} implementation of Graph Convolutional Networks (GCN) \cite{kipf} with the adjacency matrix: $\hat{A} = A + 2I$. Dimensions of the graph layers were the same as described above for PointNet. We note that CGN, and in general message passing networks on graphs are not universal approximators.

\subsection{Color matching}
\paragraph{Data preparation.}
We used the Places \cite{zhou2017places} and the CelebA \cite{liu2018large} datasets. For the places dataset, we first selected 25 classes that have the largest number of images. We generated the train and validation sets from the standard train split, and used the standard validation split as test. For the CelebA dataset, we used the standard splits. In both cases we generate 30,000 train examples and 3,000 examples for validation and test with resolution $64\times 64$ 

\paragraph{Network and training.} We used U-net like networks. All architecture are composed of an encoder followed by a DeepSets block and a decoder. The encoder and decoder are composed of convolution blocks (2X(conv,batchnorm,relu)) according to the DSS variant/Siamese+DS architecture with each folowed by a max pooling layer with stride=2 for the first encoding layers and stride=8 for the last encoding layer. The DeepSets block is composed of three DeepSets layers with the same number of features as in its input. each decoding block applies similar convolution blocks, upsamples the signal and concatenates the appropriate features from the encoding phase. We use the following number of features: (50,100,150,200) for \textit{DSS(sum)} and \textit{DSS(max)}, (64,128,200,300) for \textit{DSS(Sridhar)} and Siamese+DS and (75,100,150,160) for \textit{DSS(Aittala)}. Training was done with batch size = 32 for 50 epochs starting with initial learning rate 0.001 and learning rate decay of $0.4$ every 10 epochs, and with validation-based early stopping. We use the $L_1$ loss.

\subsection{Burst image deblurring}
\paragraph{Data generation.} We follow the protocol in \cite{aittala2018burst}. We generate blurred images by randomizing a (non-centered) blur kernel and noise. We use a loss that penalizes the deviations of the output image and its gradients from the original image. We also added the mean absolute error of the trivial predictor that outputs the median pixel of the images in the burst at each pixel (the mean predictor produced worse results). we used training set of size 100,000 and test/validation sets of size 10,000, randomly chosen from the ImageNet dataset \cite{deng2009imagenet}. We down-sample images to $128\times 128$ for efficiency.  Training was done with batch size=32, learning rate of $0.003$ and a decay rate of $0.95$ every epoch for 35 epochs and validation-based early stopping.

\paragraph{Network and training.} we have used the same network architecture as in the color channel matching experiment followed by a set max pooling layer and two additional 2D convolutions. We have used the following number of features: (48,50,100,150,200) for \textit{DSS(sum)} and \textit{DSS(max)}, (48,64,128,200,300) for \textit{DSS(Sridhar)} and \textit{Siamese+DSS} and (75,100,110,125,125) for \textit{DSS(Aittala)}.

\section{Examples}\label{s:examples_and_generalizations}
In this subsection, we discuss how our general results can be used in three specific scenarios: learning sets of images, sets of sets, and sets of graphs.

\paragraph{Learning sets of images.} In this case, we write $d=h\cdot w$ for $h,w\in \mathbb{N}$, that is, each $x_i$ is a vector in $\R^{hw}$, and we $H$ to be the group of $2D$ circular translations. According to Theorem \ref{thm:uniform}, a general linear equivariant layer for this setup can be written as $L(x_i)=L_1(x_i) + L_2\left(\sum_{j\neq i} x_j\right)$. In other words, the layer consists of two different convolutional layers, where the first layer $L_1$ is applied to each image independently and the second layer $L_2$ is applied to the sum of all images. This layer is easy to implement and we make an extensive use of it in the experiment section (section \ref{s:exp}). We note that certain temporal or periodic signals can be handled in a similar fashion. In this case $x_i\in \R^k$ and is the group of $1D$ circular translations.

\paragraph{Learning sets of sets.} Another useful application of our theory is for learning sets of consistently-ordered sets. See Section \ref{s:exp} for an example on sets of point-clouds. Here, we set $d = m\times k$ where each item $x_i$ is an $m \times k$ matrix representing a set of $k$-dimensional points. The group $H$ in this case is $S_m$ which acts by permuting rows. We note that equivariance to this type of group action was first considered by \cite{Hartford2018} for learning interactions between sets. In their paper, \citet{Hartford2018} also characterized the maximal linear equivariant basis (a special case of Theorem \ref{thm:uniform}) which (as they nicely show) can be easily implemented by simple summation operations. 

\paragraph{Learning sets of graphs.} Our layers can also be used to learn sets of graphs for tasks such as graph anomaly detection and graph classification. See Section \ref{s:exp} for an example on graphs that represent 3D shapes. In this case, $d=k^2$ and each item $x_i$ is a $k\times k$ tensor (possibly with another feature dimension) representing the interactions between the $k$ vertices in the graph (\eg, an adjacency of affinity matrix). The group $H$ in this case is $S_k$, acting on $x_i$ by permuting its rows and columns. 

\paragraph{Revisiting Deep Sets} As our final example we note that both the characterization of equivariant layers and the universal approximation results in \cite{zaheer2017deep} are special cases of our theoretical results (Theorems \ref{thm:uniform}, \ref{thm:invariant_universality}) where we set $H=I_d$, that is, the symmetry group of the elements $x_i$ is trivial.

\ignore{
\subsection{Beyond linear layers}\label{s:beyond_linear}
In this section we discuss several generalizations of our framework. We have seen that a linear $G$-equivariant layer is a sum of a linear $H$-invariant function that operates on each element independently, with another linear $H$-invariant layer that operates on the sum of all elements. This can be written in the following form: 

\begin{equation} \label{e:general_model}
\begin{split}
    L(X)_{i}=L_1(x_i) + L_2\left(\sum_{j\neq i}x_j\right) =
    f(x_i) + g(x_1,\dots,x_n)_i
\end{split}
\end{equation}
where  $f(x)=L_1(x)$, $g(x_1,\dots,x_n) = L_2\left(\sum_{j\neq i}x_j\right)$ and $L_1,L_2$ are linear $H$-equivariant functions. We call $f$ the \emph{Siamese part} as it processes each item individually, and $g$ the \emph{aggregation part} as it aggregates information from all other members of the set. This decomposition of a linear $G$-equivariant layer into a sum of a Siamese $H$-equivariant function and an aggregation function suggests a way of constructing more general $G$-equivariant layers. More concretely, we consider the following generalizations:

\paragraph{Non-linear Siamese functions} We can consider non-linear Siamese functions $f$ in equation \ref{e:general_model}. For example in case we want to process sets of graphs, one might want to replace the equivariant linear layers from \cite{maron2018invariant} with non-linear message-passing layers \cite{Gilmer2017} or the quadratic layer suggested in \cite{maron2019provably}.  

\paragraph{Non-linear aggregation functions.} We can consider replacing $g$ with non-linear aggregation functions. Below, we explore a few reasonable options:

The simplest type of aggregation functions are $S_n$-invariant functions, \ie, functions for which the following holds: $g(x_1,\dots,x_n)_i=g(x_1,\dots,x_n)$ for all $i=1,\dots,n$.
In this category we can list the summation function $g(x_1,\dots,x_n) = h\left(\sum_{j=1}^{n}x_j\right)$ where $h$ is any $H$-equivariant function, and the summation can be replaced with other operators such as maximum \cite{aittala2018burst}, median and mean. A possible downside of such aggregation functions is that any item in the set is treated in the same way, which might not be optimal. The second type of aggregation functions are $S_n$-equivariant and take into account the specific instance $x_i$ when aggregating information from the other members of the set. The most popular example of such aggregation function is an attention mechanism \cite{bahdanau2014neural} which calculates different weights for each element. In this case $g$ takes the following form:
\begin{align*}
      g(X)_{i}= \sum_{j=1}^{n}\alpha_{ij}h(x_j), \quad \alpha_{ij}=\frac{\exp(h'(x_i)^T\cdot h''(x_j))}{\sum_{k=1}^n \exp(h'(x_i)^T\cdot h''(x_k))}
\end{align*}
  
where $h,h',h''$ are $H$-equivariant linear functions and we can replace $\alpha$ with any other attention mechanism. We note that similar attention mechanisms were considered in the context of sets in several previous works: \citet{vinyals2015order, ilse2018attention, lee2019set,yang2018attentional} consider attention mechanisms on feature vectors without symmetries while \cite{liu2019permutation} considered separate attention blocks for images. See Section \ref{s:exp} for an experiment with an attention aggregation.}

\section{Multi-view reconstruction}\label{s:multiview}
We tested DSS on a multi-view reconstruction task. Here, the input is a set of images of a 3D object and the task is to predict its 3D structure. We closely follow \citet{sridhar2019multiview}, that pose this task as a learning problem in which a network is trained to ``lift'' image pixels in each view to their 3D normalized coordinate space (NOCS). NOCS is unique in that it canonicalizes shape pose and scale and thus makes the view-aggregation as simple as a union operation. In addition to predicting the NOCS representation of each foreground pixel, the network also predicts the coordinates of the occluded part of the object as if the camera was an x-ray. 

The architecture proposed by \cite{sridhar2019multiview} advocates a mean-subtraction aggregation scheme. After each convolutional block, the mean of all set elements is subtracted. This aggregation scheme can be seen as a specific case of DSS, in which the sum of all elements is further processed by a different convolution layer and only then added to the elements. This raises up an interesting question of whether a simple modification to the architecture of \cite{sridhar2019multiview}, in the form of changing the aggregation step to apply a convolution block to the sum of all set elements, can improve performance. 




Following the same experimental settings prescribed by the authors, we tested the modified architecture (named Sridhar+DSS) in the case of a fixed-sized input of $3$ views per model on three different classes of 3d objects as in \cite{sridhar2019multiview}: cars, Airplanes and chairs. The results are summarized in table \ref{t:multi-view}. As can be seen, our proposed modification gives a significant boost in performance on 2 out of 3 object classes. While we lack a good explanation for why the performance on chairs is decreased, this result suggests that it is worth to further explore the potential benefit of DSS for this task. We leave the full exploration of this specific task to future work. 

\begin{table}[th]
  \centering
  
    \begin{tabular}{l|c|c}
        \hline
         Category & Sridhar & Sridhar+DSS \\
        \hline
        Cars &  0.1645 & \textbf{0.1273} \\ 
        Airplanes & 0.1571 & \textbf{0.1163} \\ 
        Chairs & \textbf{0.1845} & 0.2345 \\ \hline
        Average &  0.1687& \textbf{0.1593}  \\ \hline

    \end{tabular}%
    \caption{Reconstruction error for the Multi-view 3D object reconstruction task. We compare the performance reported in \cite{sridhar2019multiview} and our suggested modification (Sridhar+DSS). Reported errors are 2-way Chamfer distance between the ground truth shape and its reconstruction, multiplied by 100.}        \label{t:multi-view}
\end{table}

\end{document}